\documentclass[sigconf, screen]{acmart}
\AtBeginDocument{%
  }

\setcopyright{acmlicensed}
\copyrightyear{2026}
\acmYear{2026}
\setcopyright{cc}
\setcctype{by}
\acmConference[MM '26]{Proceedings of the 35th ACM International Conference on Multimedia}{November 10--14, 2026}{Rio de Janeiro, Brazil}
\acmBooktitle{Proceedings of the 35th ACM International Conference on Multimedia (MM '26), November 10--14, 2026, Rio de Janeiro, Brazil}
\acmDOI{10.1145/3767308.3836169}
\acmISBN{979-8-4007-2213-4/2026/11}

\usepackage{multirow}
\usepackage{booktabs}
\usepackage{amsthm}
\usepackage{amsmath}
\usepackage[table]{xcolor}
\usepackage[nameinlink,noabbrev]{cleveref}

\newtheorem{assumption}{Assumption}
\newcommand\blfootnote[1]{%
  \begingroup
  \renewcommand\thefootnote{}\footnote{#1}%
  \addtocounter{footnote}{-1}%
  \endgroup
}

\begin{document}

%%
%% The "title" command has an optional parameter,
%% allowing the author to define a "short title" to be used in page headers.
\title{OPERA: Offline Policy-guided Expert Routing and Adaptation for Universal Biomedical Image Analysis}

%%
%% The "author" command and its associated commands are used to define
%% the authors and their affiliations.
%% Of note is the shared affiliation of the first two authors, and the
%% "authornote" and "authornotemark" commands
%% used to denote shared contribution to the research.
\author{Zihan Li}
\affiliation{%
  \institution{University of Washington}
  \city{Seattle}
  \country{USA}
}

\author{Feiyang Liu}
\affiliation{%
  \institution{Delft University of Technology}
  \city{Delft}
  \country{Netherlands}
}

\author{Dandan Shan}
\affiliation{%
  \institution{Xiamen University}
  \city{Xiamen}
  \country{China}
  }

\author{Ruibo Wang}
\affiliation{%
  \institution{Delft University of Technology}
  \city{Delft}
  \country{Netherlands}
}

\author{Qingqi Hong}
\affiliation{%
  \institution{Xiamen University}
  \city{Xiamen}
  \country{China}
}

%%
%% By default, the full list of authors will be used in the page
%% headers. Often, this list is too long, and will overlap
%% other information printed in the page headers. This command allows
%% the author to define a more concise list
%% of authors' names for this purpose.
\renewcommand{\shortauthors}{Li, et al.}

%%
%% The abstract is a short summary of the work to be presented in the
%% article.
\begin{abstract}
Biomedical image analysis spans diverse modalities and tasks, yet real-world deployment is hindered by severe distribution shifts across scanners, protocols, and patient populations. High-performing models consequently require repeated domain-specific fine-tuning, which is a costly cycle that becomes impractical when labels are scarce or privacy constraints limit data sharing. We propose OPERA (Offline Policy-guided Expert Routing and Adaptation), a multi-agent ensemble framework that addresses this deployment bottleneck by treating expert weight assignment as an offline policy learning problem: a routing policy is learned from a small validation set without gradient updates to any expert agent, then deployed with test-time adaptation to handle distribution shift. OPERA coordinates heterogeneous specialist agents through complementary mechanisms. The expert profiling module learns selection policies offline, enabling informed allocation of expertise. Each agent undergoes confidence calibration through temperature adjustment, ensuring more reliable probabilistic outputs. OPERA also incorporates distribution aware adaptation, where class weights are dynamically adjusted at the batch level using statistics derived from unlabeled test data. Instance level routing assigns each sample to the most suitable expert by leveraging inter model agreement and predictive entropy. We evaluate OPERA on 9 datasets covering fundus photography, chest X-ray, CT, MRI, and multimodal diagnostic benchmarks, comparing against 30+ baselines across classification, segmentation, and multimodal settings. OPERA consistently improves performance and calibration quality, demonstrating that offline policy-guided expert agents coordination is a practical path to deployable biomedical AI without retraining. Code is on GitHub\blfootnote{Corresponding author: Zihan Li and Qingqi Hong. \\Li, Z and Liu, F have the equal contribution.}\footnote{\href{https://github.com/HUANGLIZI/OPERA}{https://github.com/HUANGLIZI/OPERA}}.
\end{abstract}

%%
%% The code below is generated by the tool at http://dl.acm.org/ccs.cfm.
%% Please copy and paste the code instead of the example below.
%%
\begin{CCSXML}
<ccs2012>
 <concept>
  <concept_id>00000000.0000000.0000000</concept_id>
  <concept_desc>Do Not Use This Code, Generate the Correct Terms for Your Paper</concept_desc>
  <concept_significance>500</concept_significance>
 </concept>
 <concept>
  <concept_id>00000000.00000000.00000000</concept_id>
  <concept_desc>Do Not Use This Code, Generate the Correct Terms for Your Paper</concept_desc>
  <concept_significance>300</concept_significance>
 </concept>
 <concept>
  <concept_id>00000000.00000000.00000000</concept_id>
  <concept_desc>Do Not Use This Code, Generate the Correct Terms for Your Paper</concept_desc>
  <concept_significance>100</concept_significance>
 </concept>
 <concept>
  <concept_id>00000000.00000000.00000000</concept_id>
  <concept_desc>Do Not Use This Code, Generate the Correct Terms for Your Paper</concept_desc>
  <concept_significance>100</concept_significance>
 </concept>
</ccs2012>
\end{CCSXML}

\ccsdesc[300]{Applied computing~Life and medical sciences}

%%
%% Keywords. The author(s) should pick words that accurately describe
%% the work being presented. Separate the keywords with commas.
\keywords{Biomedical image analysis, Multi-agent mechanism}
%% A "teaser" image appears between the author and affiliation
%% information and the body of the document, and typically spans the
%% page.

%% Submission ID.
%% Use this when submitting an article to a sponsored event. You'll
%% receive a unique submission ID from the organizers
%% of the event, and this ID should be used as the parameter to this command.

% \received{20 February 2007}
% \received[revised]{12 March 2009}
% \received[accepted]{5 June 2009}

%%
%% This command processes the author and affiliation and title
%% information and builds the first part of the formatted document.
\maketitle

\section{Introduction}
The medical AI community faces a critical deployment paradox: while specialized models achieve remarkable performance on specific tasks, their brittleness under distribution shift necessitates costly retraining for each new clinical environment. Meanwhile, foundation models promise generalization but require massive computational resources and organized data inaccessible to most healthcare institutions. This raises a fundamental question: is there a third path that combines the strengths of both paradigms? In practice, models that perform well on one dataset or modality often degrade when transferred to a new clinical domain, motivating the pursuit of \emph{universal} biomedical imaging models that can be generalizable.

\begin{figure}[!t]
  \begin{center}
    \centerline{\includegraphics[width=\columnwidth]{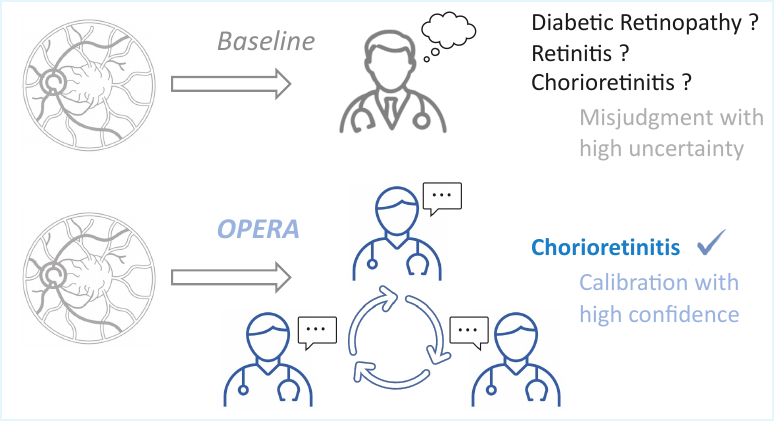}}
    \vspace{-3mm}
    \caption{Motivation of OPERA. \textbf{Top:} a baseline model yields ambiguous predictions for a fundus image (e.g., diabetic retinopathy/retinitis/chorioretinitis), leading to misjudgment with high uncertainty. \textbf{Bottom:} OPERA aggregates multiple expert agents with frozen policies and calibrates their outputs, producing a confident and correct diagnosis (e.g., chorioretinitis) through inference-time ensemble fusion.}
    \label{fig_motivation}
  \end{center}
  \vspace{-8mm}
\end{figure}

Recent foundation and vision-language models provide transferable representations for medical imaging \cite{moor2023, zhou2023foundation, yao2025}, yet they typically depend on large-scale pretraining and/or task-specific fine-tuning to close domain gaps, which is costly and sometimes infeasible when target-domain annotations are limited or unavailable. Similarly, general-purpose segmentation models such as SAM \cite{kirillov2023} show limited zero-shot performance on medical images \cite{mazurowski2023segment}, and adapting them commonly requires additional labeled data and optimization \cite{shi2023}. Ensemble learning offers a complementary route to robustness: combining diverse predictors can reduce correlated errors and improve reliability \cite{Kuncheva2003, Codella2016, Ju2017}. However, standard ensembles are often expensive to train and deploy, frequently requiring joint optimization, stacking, or iterative fine-tuning, and they increase inference cost and latency \cite{Noothout2022, Farhadi2025}. This challenge also mirrors the core problem of offline reinforcement learning: \emph{how to learn a good policy from a fixed historical dataset without environment interaction?} We leverage this correspondence to motivate our offline policy learning formulation and test-time adaptive routing.

To this end, We propose \textbf{OPERA}, a multi-agent expert coordination framework that frames ensemble weight learning as offline policy optimization. The framework comprises expert agents and the coordinator that assigns routing weights through (i) offline policy learning on a validation dataset, (ii) confidence calibration, and (iii) test-time distribution-aware adaptation without backpropagation during deployment. Our main contributions are summarized as follows: (1) we introduce OPERA, the first agentic vision ensemble framework with offline policy learning; (2) we propose a hierarchical inference-time adaptation policy (calibration and dynamic weighting) to improve robustness under distribution shift; and (3) we demonstrate consistent gains of multi-agent coordination generalizes across different modalities without any retraining.

\section{Related Works}

\subsection{Ensemble Learning}
Ensemble learning has proven to be a powerful strategy in computer vision, where combining multiple models' predictions often yields better accuracy and reliability than any single model \cite{Codella2016}. Numerous studies consistently report that ensembles outperform individual networks on diverse medical imaging benchmarks \cite{Ju2017, kong2025mastering}. Effective ensembles depend on the diversity of their constituent models. If the models make mistakes that are not strongly correlated, the ensemble is better positioned to correct them \cite{Kuncheva2003, Ju2017}. While ensemble methods consistently boost accuracy in evaluations, they also introduce notable practical limitations. Running multiple models in parallel substantially increases computational cost, memory consumption, and system latency, which is a major challenge for time-sensitive and resource-constrained clinical deployment scenarios \cite{Noothout2022, liu2019accurate, cao2024predictive}. In addition, many existing ensemble strategies require joint retraining, complex stacking procedures, further raising the barrier to real-world adoption \cite{Farhadi2025, li2022semi}. In contrast to conventional ensemble methods that rely on computationally expensive joint retraining, iterative fine-tuning, or complex meta-learning schemes \cite{Ju2017, shi2003study, gordji2017orthogonal}, our approach introduces an offline-calibrated zero-retraining framework for universal medical image analysis. We integrate diverse models exclusively during the inference phase, utilizing a dynamic weighting mechanism that synthesizes model outputs without requiring backpropagation and parameter updates. Our proposed OPERA preserves the accuracy and robustness advantages of ensembling while eliminating the substantial computational overhead with ensemble methods.

\subsection{Offline Policy Learning}

Offline policy learning trains policies entirely from fixed historical datasets, eliminating the need for active environmental interaction~\cite{levine2020offline,prudencio2023survey}. This paradigm is particularly suited to medical imaging, where online exploration entails prohibitive safety and regulatory risks. A central challenge is distributional shift: when a learned policy proposes actions outside the behavioral training distribution, standard temporal-difference learning overestimates Q-values on out-of-distribution state--action pairs, leading to policy degradation. Contemporary methods address this through explicit behavioral constraints or pessimistic value estimation, which assigns conservatively low returns to unseen transitions~\cite{levine2020offline}. In OPERA, the Expert Policy Memory (EPM) module instantiates these principles by encoding expert radiotherapy trajectories with distributional conservatism, preventing overconfident extrapolation to under-represented imaging modalities. Recent work on offline-to-online RL demonstrates that offline-pretrained policies can be efficiently refined at deployment time~\cite{lee2022offline2online}. Offline pretraining substantially compresses the online exploration horizon, allowing lightweight adaptation to address residual distributional gaps rather than relearning from scratch. OPERA operationalizes this via Dynamic Anchor Adaptation (DAA) and Intelligent Experience Replay (IER), which perform test-time policy refinement as site-specific imaging characteristics are encountered. The routing logic of IER draws on the Mixture-of-Experts (MoE) paradigm~\cite{shazeer2017moe,fedus2022switch}, wherein a learned sparse gating function selectively activates specialized sub-modules per input, providing both parameter efficiency and expert specialization. IER adapts this mechanism to experience replay: incoming deployment trajectories are routed to modality-specific replay buffers, ensuring adaptation signals remain contextually coherent. The Mixture-of-Agents frameworks~\cite{wang2024moa,wang2026any2any} also motivate OPERA's multi-specialist ensemble design, wherein diverse modality-specific agents refine predictions through iterative aggregation.

\subsection{Biomedical Image Analysis}
Large foundation models are increasingly being adopted in medical imaging because their learned representations transfer well across tasks~\cite{moor2023}. However, adapting these models to new domains typically requires extensive self-supervised pretraining on massive datasets and computationally expensive fine-tuning~\cite{zhou2023foundation}. While Parameter-Efficient Fine-Tuning (PEFT) methods such as LoRA and adapters have emerged to reduce these adaptation costs~\cite{hu2022lora, houlsby2019}, they still require target-domain annotations and model updates prior to deployment, limiting flexibility when encountering truly unseen clinical distributions. Test-Time Adaptation (TTA) offers an alternative by adapting models during inference using only unlabeled test data~\cite{liang2024comprehensive}, yet existing TTA approaches often focus on single-model scenarios and may struggle with the diverse distribution shifts in anatomy, imaging protocols, and noise characteristics inherent to medical imaging. In addition, the general-purpose Segment Anything Model (SAM)~\cite{kirillov2023} exhibits limited zero-shot performance on medical images~\cite{mazurowski2023segment, li2026large}, and while task-specific fine-tuning can alleviate this gap~\cite{shi2023, ando2011predictive, li2025boosting}, it necessitates aligned annotations and retraining for each new domain. Building on these insights, we propose OPERA, a framework that synergizes ensemble learning with specialized expert agents for universal medical image analysis. OPERA incorporates test-time adaptation mechanisms that dynamically adjust model contributions during inference without requiring labeled data or model retraining, distinguishing our approach from fine-tuning and existing TTA methods.

\begin{figure*}[t]
  \begin{center}
    \centerline{\includegraphics[width=0.95\textwidth]{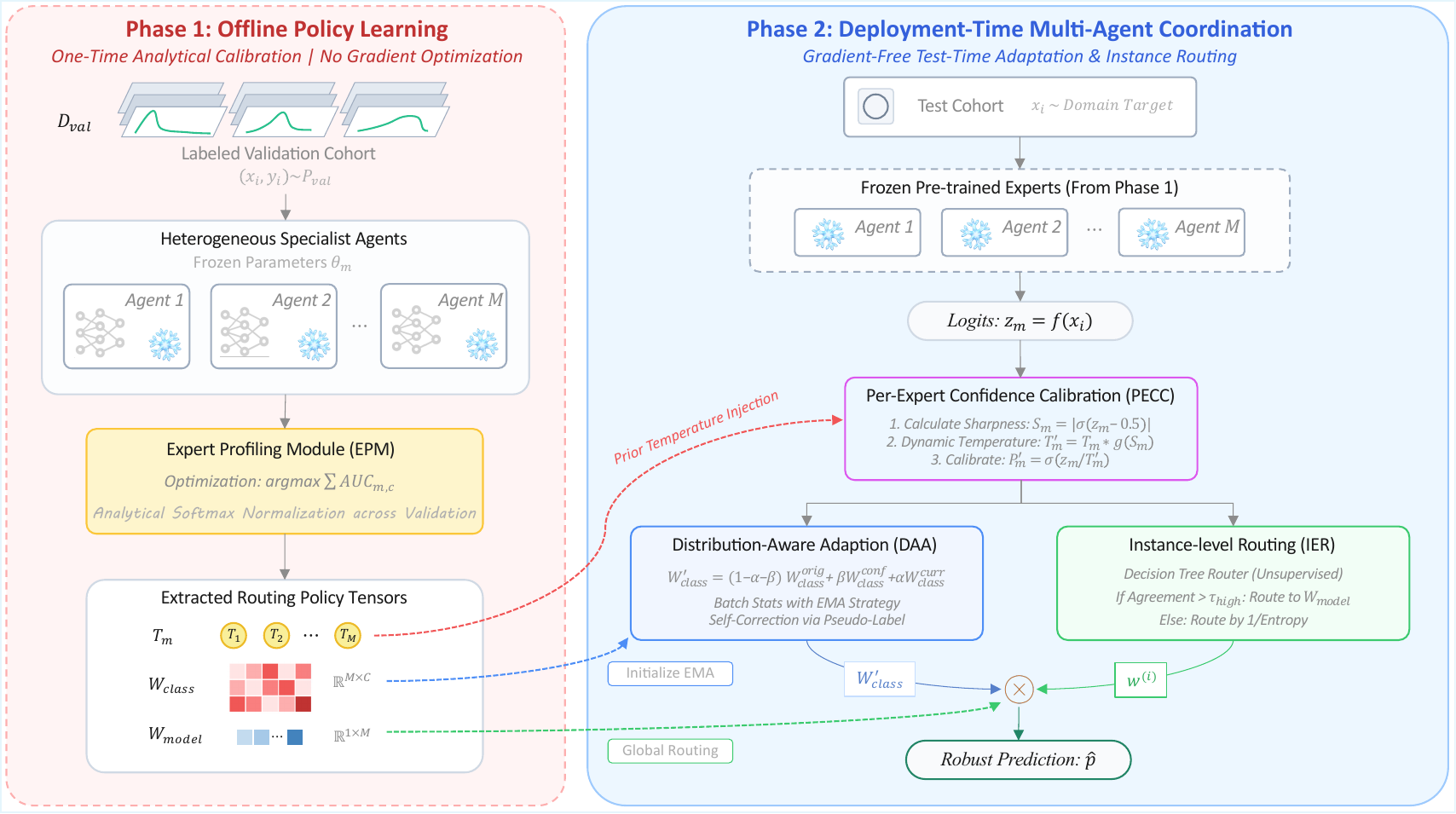}}
    \vspace{-3mm}
    \caption{Overview of the OPERA framework. In the Offline Stage, the Expert Profiling Module (EPM) extracts initial model weights ($w_{model}$), class-level weights ($W_{class}$), and temperature parameters ($T_m$) from validation data. During Deployment-Time, frozen expert predictions undergo Per-Expert Confidence Calibration (PECC) to adjust sharpness and output calibrated probabilities ($p_{i,c}^{\prime(m)}$). The ensemble is then adapted via the Dynamic Fusion module: Distribution-Aware Adaptation (DAA) refines class-level weights ($W_{class}^{\prime}$) dynamically using batch statistics. Instance-level Expert Routing (IER) generates sample-specific weights ($w^{(i)}$) leveraging inter-model agreement and uncertainty.}
    \label{fig_pipeline}
  \end{center}
  \vspace{-7mm}
\end{figure*}

\section{Methods}
OPERA treats each pre-trained expert as a specialist agent and the ensemble fusion module as an orchestrator that learns offline which agents to trust for each disease class and each input instance. The framework integrates three complementary medical imaging models for different tasks. Rather than uniformly averaging predictions, OPERA employs a multi-level weighting strategy that dynamically adjusts the contribution of each expert agent based on its individual strengths and the characteristics of input data. Here, $M$ denotes the number of expert agents, $N$ the number of samples, $C$ the number of disease classes, and $B$ the batch size. For model $m \in \{1, \ldots, M\}$, we denote its probability predictions as $\mathbf{p}^{(m)} \in [0,1]^{N \times C}$, where $p^{(m)}_{i,c}$ represents the predicted probability for sample $i$ and class $c$. We also frame the segmentation task as a pixel-wise classification task in our study.

\subsection{Expert Profiling Module (EPM)}

Expert Profiling Module establishes initial ensemble weights by analyzing model performance on the offline dataset for policy learning with available ground truth labels $\mathbf{y} \in \{0,1\}^{N \times C}$. Formally, EPM solves an offline policy learning problem: given a fixed offline dataset $D_{val} = {(x_i, y_i)}$, learn a routing policy. Crucially, this optimization requires only forward passes through frozen expert models, with no gradient updates, satisfying the offline RL property of policy improvement without environment interaction. For each model-class pair $(m,c)$, we compute AUC for the reward-driven policy optimization $\text{AUC}_{m,c} = \text{AUROC}(\mathbf{y}_{\cdot,c}, \mathbf{p}^{(m)}_{\cdot,c})$, where $\mathbf{y}_{\cdot,c}$ and $\mathbf{p}^{(m)}_{\cdot,c}$ denote the ground truth labels and predictions for class $c$ across all validation samples. The performance matrix $\mathbf{A} \in \mathbb{R}^{M \times C}$ with entries $A_{m,c} = \text{AUC}_{m,c}$ captures the expertise distribution across models and classes.  From this performance matrix, we derive model-level weights $\mathbf{w}_{\text{model}} \in \mathbb{R}^M$ that reflect overall model quality. First, we compute the average performance for each model across all classes and $M$ is set to 3 in our study: $\bar{A}_m = \frac{1}{C}\sum_{c=1}^{C} A_{m,c}$.

These averages are then normalized using temperature-scaled softmax to obtain model-level weights:
\begin{equation}
w_{\text{model},m} = \frac{\exp(\tau \bar{A}_m)}{\sum_{m'=1}^{M} \exp(\tau \bar{A}_{m'})},
\end{equation}
where $\tau$ is a temperature parameter that controls weight concentration. We set $\tau=10.0$ to balance specialization and diversity, ensuring that better-performing models receive higher weights while maintaining ensemble diversity.

To capture model-specific expertise at the disease level, we introduce class-level attention weights $\mathbf{W}_{\text{class}} \in \mathbb{R}^{M \times C}$. For each class $c$, we normalize its performance of all models:

\begin{equation}
W_{\text{class},m,c} = \frac{\exp(\tau A_{m,c})}{\sum_{m'=1}^{M} \exp(\tau A_{m',c})}.
\label{eq4}
\end{equation}

This formulation ensures that $\sum_{m=1}^{M} W_{\text{class},m,c} = 1$ for each class, allowing models with higher AUC for a specific disease to receive greater weight when predicting that condition. Additionally, we compute temperature calibration parameters for each model based on their validation performance. The complete set of tuned weights, including all model-level weights set $\mathbf{w}_{\text{model}}$, class-level weights set $\mathbf{W}_{\text{class}}$, and initial temperatures $\{T_m\}_{m=1}^M$, serves as the learned routing policy  for subsequent deployment-time adaptation.

\subsection{Per-Expert Confidence Calibration (PECC)}

Per-Expert Confidence Calibration addresses prediction calibration by automatically adjusting the sharpness of each model's probability outputs based on their prediction characteristics. This optimization operates independently for each model without requiring labeled data, making it applicable during both validation and test phases. For each model $m$, we assess prediction sharpness by measuring how far predictions deviate from maximum uncertainty across a batch:
\begin{equation}
s_m = \frac{1}{BC}\sum_{i=1}^{B}\sum_{c=1}^{C} |p^{(m)}_{i,c} - 0.5|,
\end{equation}
where higher values indicate more decisive predictions. Based on this metric, the temperature parameter $T_m$ for model $m$ is selected as 1.5 ($s_m$ > 0.4), 0.7 ($s_m$ < 0.1) or 1.0 according to the value of $s_m$.

The temperature scaling is applied through a logit transformation. Given predicted probabilities $\mathbf{p}^{(m)}$, we first convert them to logits using the inverse sigmoid function:
\begin{equation}
z^{(m)}_{i,c} = \log\left(\frac{p^{(m)}_{i,c} + \epsilon}{1 - p^{(m)}_{i,c} + \epsilon}\right),
\end{equation}
where $\epsilon = 10^{-7}$ prevents numerical instability at boundaries. The logits are then scaled by the temperature and converted back to calibrated probabilities:
\begin{equation}
p'^{(m)}_{i,c} = \sigma\left(\frac{z^{(m)}_{i,c}}{T_m}\right) = \frac{1}{1 + \exp\left(-\frac{z^{(m)}_{i,c}}{T_m}\right)}.
\end{equation}
Temperature scaling preserves the rank ordering of predictions while adjusting their spread, ensuring that both individual model contributions and their confidence levels are appropriately calibrated for ensemble combination.

\subsection{Distribution-Aware Adaptation (DAA)}

DAA adapts the class-level attention weights dynamically during inference by analyzing prediction confidence patterns on the test set, without requiring ground truth labels. This online policy correction under shift allows the model to respond to distribution shifts or domain-specific characteristics in test data while keeping model-level weights fixed to preserve the overall quality hierarchy. For each test batch, we compute class-level confidence scores that reflect how decisive each model's predictions are. Given a model $m$ and class $c$, the confidence for a batch of size $B$ is measured as the average deviation from maximum uncertainty:
\begin{equation}
\text{conf}_{m,c}^{(k)} = \frac{1}{B}\sum_{i=1}^{B} |p'^{(m)}_{i,c} - 0.5|,
\label{eq9}
\end{equation}
where $k$ denotes the batch index and higher confidence indicates predictions closer to zero or one rather than ambiguous values near 0.5. These confidence scores are accumulated across $K$ processed batches to compute running averages:
$\bar{\text{conf}}_{m,c} = \frac{1}{K}\sum_{k=1}^{K} \text{conf}^{(k)}_{m,c}$.
For every 100 batches, we update the class-level attention weights set $\mathbf{W}'_{\text{class}} = \{w'_{\text{class},m}\}_{m=1}^{M}$ by blending the original validation-tuned weights with test-time confidence statistics based on the previous $k$ batches. Let $\mathbf{W}^{\text{orig}}_{\text{class}}$ denote the original validation-tuned weights, $\mathbf{W}^{\text{curr}}_{\text{class}}$ the current adapted weights, and $\mathbf{W}^{\text{conf}}_{\text{class}}$ the confidence-based weights. The update rule is:
\begin{equation}
\mathbf{W}'_{\text{class}} = (1 - \alpha - \beta) \mathbf{W}^{\text{orig}}_{\text{class}} + \beta \mathbf{W}^{\text{conf}}_{\text{class}} + \alpha \mathbf{W}^{\text{curr}}_{\text{class}},
\label{eq11}
\end{equation}
where the confidence-based weights set $\mathbf{W}^{\text{conf}}_{\text{class}}$ are computed by normalizing the accumulated confidence scores:
\begin{equation}
W^{\text{conf}}_{\text{class},m,c} = \frac{\exp(\bar{\text{conf}}_{m,c})}{\sum_{m'=1}^{M} \exp(\bar{\text{conf}}_{m',c})}.
\end{equation}
We set $\alpha = 0.1$ and $\beta = 0.1$ to ensure gradual adaptation that preserves validation-based knowledge while incorporating test-time observations. This adaptive strategy enables the ensemble to adjust to class-specific distribution shifts without requiring labeled data or model retraining.

\subsection{Instance-level Expert Routing (IER)}

Instance-level Expert Routing introduces sample-specific weight adjustments that account for the varying reliability of different models across individual test instances. Unlike the previously described methods that operate at the model or class level, IER produces instance-level weights $\mathbf{w}^{(i)} \in \mathbb{R}^M$ for each sample $i$ (each pixel in segmentation), recognizing that model performance can vary substantially depending on input characteristics. The weighting strategy relies on two unsupervised metrics computed for each sample: inter-model agreement and per-model prediction uncertainty. Agreement between models is measured using pairwise cosine similarity of predictions:
\begin{equation}
\text{sim}(m, m', i) = \frac{\mathbf{p'}^{(m)}_{i,\cdot} \cdot \mathbf{p'}^{(m')}_{i,\cdot}}{\|\mathbf{p'}^{(m)}_{i,\cdot}\| \|\mathbf{p'}^{(m')}_{i,\cdot}\|},
\end{equation}
where $\mathbf{p'}^{(m)}_{i,\cdot}$ denotes the calibrated prediction vector for sample $i$ across all classes. The overall agreement score for sample $i$ is the average similarity across all model pairs:
\begin{equation}
a_i = \frac{2}{M(M-1)}\sum_{m=1}^{M-1}\sum_{m'=m+1}^{M} \text{sim}(m, m', i).
\label{eq14}
\end{equation}
Prediction uncertainty for each model is quantified via averaged binary entropy $H(p)=-p\log p-(1-p)\log(1-p)$:
\begin{equation}
h_{i,m}=\frac{1}{C}\sum_{c=1}^C H\left(p'^{(m)}_{i,c}\right).
\end{equation}
% Based on these metrics, we apply three conditional weighting strategies. For high-agreement cases where $a_i > 0.90$ and the variance across model predictions $\sigma^2_i < 0.03$, we use the base model-level weights: $\mathbf{w}^{(i)} = \mathbf{w}_{\text{model}}$. For high-disagreement cases where $a_i < 0.60$ or $\sigma^2_i > 0.12$, we weight models inversely proportional to their entropy:
Based on these metrics, we apply three conditional weighting strategies. We first compute the variance across model predictions for sample $i$ as $\sigma^2_i = \frac{1}{MC} \sum_{m=1}^{M} \sum_{c=1}^{C} \left( p'^{(m)}_{i,c} - \bar{p}'_{i,c} \right)^2$, where $\bar{p}'_{i,c}$ is the mean prediction across all $M$ models for sample $i$ and class $c$. This variance quantifies the degree of disagreement among expert agents at the prediction level. For high-agreement cases where $a_i > 0.90$ and $\sigma^2_i < 0.03$, we use base model-level weights: $\mathbf{w}^{(i)} = \mathbf{w}_{\text{model}}$. For high-disagreement cases where $a_i < 0.60$ or $\sigma^2_i > 0.12$, we weight models inversely proportional to their entropy:
\begin{equation}
w^{(i)}_m \propto \exp\left(\frac{\gamma}{h_{i,m} + \epsilon}\right),
\label{eq16}
\end{equation}
where $\gamma = 2.0$ controls differentiation strength. For intermediate cases, we weight by prediction extremity $e_{i,m} = \frac{1}{C}\sum_{c=1}^{C}|p^{(m)}_{i,c} - 0.5|$ with base weights ($\mu$=3.0, $\zeta$ = 0.7):
\begin{equation}
\mathbf{w}^{(i)} = \zeta \cdot \text{softmax}(\mu \cdot \mathbf{e}_i) + (1-\zeta) \cdot \mathbf{w}_{\text{model}}.
\end{equation}

The final ensemble prediction for sample $i$ combines all weighting levels through element-wise multiplication and normalization. The complete weight matrix for sample $i$ is:
\begin{equation}
W_{\text{final},i,m,c} = w_{\text{model},m} \cdot {w}'_{\text{class,m,c}} \cdot w^{(i)}_m,
\label{eq18}
\end{equation}
which is normalized across models for each class:
\begin{equation}
W'_{\text{final},i,m,c} = W_{\text{final},i,m,c} / \sum_{m'=1}^{M} W_{\text{final},i,m',c}
\end{equation}
The final ensemble prediction is then computed as:
\begin{equation}
\hat{p}_{i,c} = \sum_{m=1}^{M} W'_{\text{final},i,m,c} \cdot p'^{(m)}_{i,c},
\end{equation}
where $p'^{(m)}_{i,c}$ denotes the temperature-calibrated prediction from PECC. This hierarchical weighting structure enables the ensemble to leverage complementary model strengths at multiple scales: overall model quality, disease-specific and sample-specific expertise.

\section{Experiments}
\paragraph{Implementation Details} All our experiments are conducted with NVIDIA RTX A6000 GPUs. We use AUC (Area Under the Curve), ACC (Accuracy), and mAP (mean Average Precision) to compare the performance of classification task. As for the multi-label setting, we report the category-wise average accuracy as ACC. Dice coefficient (Dice) and Jaccard coefficient (Jaccard) are used in the segmentation task. For fair comparison, all baselines are fine-tuned on the same training data. More details are in the appendix.

\begin{table}[!t]
\LARGE
    \centering
    \caption{The performance evaluation on RFMiD. \textbf{Bold} indicates best performance and \underline{underline} shows second-best. CLIP-\textit{X} denotes using \textit{X} as vision encoder with CLIP.} % 7680
    \vspace{-4mm}
    \label{tab:performance_rfmid}
    \resizebox{\linewidth}{!}{
        \begin{tabular}{l|ccc|ccc}
        \toprule
        \multirow{2}{*}{Method} & \multicolumn{3}{c|}{\textbf{Linear Probe (\%)}} & \multicolumn{3}{c}{\textbf{Fully Fine-tune (\%)}} \\
                & AUC   & ACC   & mAP   & AUC   & ACC   & mAP \\ 
        \hline
        KgCoOp \textcolor{gray}{(CVPR-23)} & 50.82 & 92.21 & 7.96 & 70.36 & 92.28 & 21.49\\
        RETFound \textcolor{gray}{(Nature-23)} & 60.16 & 92.55 & 16.37 & 84.62 & 93.48 & 50.21\\
        KeepFIT \textcolor{gray}{(MICCAI-24)} & 51.48 & 92.19 & 10.24 & 81.52 & 93.09 & 42.39\\
        RET-CLIP \textcolor{gray}{(MICCAI-24)} & 58.94 & 92.46 & 16.02 & \underline{86.12} & \underline{93.92} & \underline{51.27}\\
        UniMed-CLIP \textcolor{gray}{(arXiv-24)} & 53.44 & 92.25 & 13.68 & 72.59 & 92.57 & 22.54 \\
        VisionFM \textcolor{gray}{(NEJM AI-24)}  & \underline{63.38} & \underline{92.59} & \underline{17.84} &  82.78 & 93.17 & 48.92 \\
        FLAIR \textcolor{gray}{(MedIA-25)} & 56.11 & 92.38 & 14.85 & 75.43 & 92.62 & 23.24\\
        M3DT \textcolor{gray}{(ICML-25)} & 61.07 & 92.50 & 16.59 & 84.90 & 93.52 & 50.48\\
        TCA \textcolor{gray}{(CVPR-25)} & 58.83 & 92.41 & 15.66 & 83.69 & 93.22 & 50.06\\\hline
        \textbf{OPERA (Ours)}& \textbf{73.25} & \textbf{93.06} & \textbf{23.82} & \textbf{89.47} & \textbf{94.64} & \textbf{58.93} \\\rowcolor{gray!20}
        ~~~$\mathbb{E}_1$: CLIP-ViT & 44.66 & 92.53 & 7.28 & 65.10 & 92.86 & 17.31 \\\rowcolor{gray!20}
        ~~~$\mathbb{E}_2$: CLIP-ResNet50 & 48.14 & 91.98 & 8.09 & 64.84 & 92.29 & 16.87 \\\rowcolor{gray!20}
         ~~~$\mathbb{E}_3$: CLIP-DenseNet121 & 50.29 & 92.20 & 10.23 & 69.58 & 92.51 & 20.54 \\
        \bottomrule
        \end{tabular}}
    \vspace{-4mm}
\end{table}

\begin{table}[!t]
\LARGE
    \centering
    \caption{The performance evaluation on OIA-DDR. \textbf{Bold} indicates best performance and \underline{underline} shows second-best. CLIP-\textit{X} denotes using \textit{X} as the vision encoder with CLIP framework. OPERA* denotes the model variant without using the labeled data to initialize model-level weights.} %3759
    \vspace{-4mm}
    \label{tab:performance_oiaddr}
    \resizebox{\linewidth}{!}{
        \begin{tabular}{l|ccc|ccc}
        \toprule
        \multirow{2}{*}{Method} & \multicolumn{3}{c|}{\textbf{Linear Probe (\%)}} & \multicolumn{3}{c}{\textbf{Fully Fine-tune (\%)}} \\
                & AUC   & ACC   & mAP   & AUC   & ACC   & mAP \\ 
        \hline
        KgCoOp \textcolor{gray}{(CVPR-23)} & 72.09 & 53.68 & 37.87 & 80.39 & 65.47 & 45.92 \\
        RETFound \textcolor{gray}{(Nature-23)} & 80.77 & 61.13 & 46.94 & \underline{85.96} & \underline{72.12} & \underline{52.21} \\
        KeepFIT \textcolor{gray}{(MICCAI-24)}  & 72.68 & 54.46 & 38.72 & 79.42 & 64.03 & 44.53 \\
        RET-CLIP \textcolor{gray}{(MICCAI-24)} & 77.43 & 58.18 & 43.31 & 83.68 & 68.82 & 48.05 \\
        UniMed-CLIP \textcolor{gray}{(arXiv-24)} & 74.67 & 56.19 & 40.18 & 81.23 & 66.69 & 46.88 \\
        VisionFM \textcolor{gray}{(NEJM AI-24)}  & 78.85 & 59.35 & 44.27 & 84.25 & 69.46 & 48.77 \\
        FLAIR \textcolor{gray}{(MedIA-25)} & \underline{82.48} & \underline{63.36} & \underline{48.09} & 85.54 & 70.63 & 50.27 \\
        TCA \textcolor{gray}{(CVPR-25)} & 79.67 & 59.80 & 45.19 & 85.81 & 71.53 &  50.92\\\hline
        \textbf{OPERA (Ours)} & \textbf{84.95} & \textbf{69.57} & \textbf{49.81} & \textbf{88.06} & \textbf{72.89} & \textbf{55.90} \\\rowcolor{gray!20}
        ~~~$\mathbb{E}_1$: CLIP-ViT& 73.30 & 55.41 & 39.21 & 85.29 & 71.75 & 49.91 \\\rowcolor{gray!20}
        ~~~$\mathbb{E}_2$: CLIP-ResNet50 & 71.69 & 54.91 & 37.52 & 80.53 & 69.78 & 46.22 \\\rowcolor{gray!20}
         ~~~$\mathbb{E}_3$: CLIP-DenseNet121 & 72.73 & 55.20 & 38.19 & 81.46 & 69.94 & 47.09 \\
         OPERA* (Ours) & 83.18 & 67.07 & 49.33 & 86.30 & 72.52 & 54.29 \\
        \bottomrule
        \end{tabular}}
    \vspace{-6mm}
\end{table}

\paragraph{Datasets and Baselines.} We utilize RFMiD~\cite{pachade2021retinal}, OIA-DDR~\cite{li2019diagnostic}, Chest X-Ray14~\cite{wang2017chestx}, OrganSMNIST~\cite{yang2023medmnist}, QaTa-COV19~\cite{degerli2022osegnet}, MosMedData+~\cite{morozov2020mosmeddata,hofmanninger2020automatic}, LA-MRI~\cite{xiong2021global}, Pancreas-CT~\cite{clark2013cancer} and Multiple-choice Evaluation Benchmark~\cite{li2025visionunite} as our evaluation datasets to compare OPERA with other baseline methods in diverse tasks. The QaTa-COV19 and MosMedData+ datasets follow the standard data split in the work~\cite{li2023lvit}. We compare proposed OPERA including fine-tuned models (ViT \cite{dosovitskiy2020image,yao2025}, ResNet50 \cite{he2016deep,wang2022multi}, DenseNet121 \cite{huang2017densely,xiao2023delving}) against 15 state-of-the-art baselines \cite{yao2023visual, zhou2023foundation, wu2024mm, du2024ret, khattak2024unimed, qiu2024development, silva2025foundation, xie2024unimiss+, perez2025exploring, yang2025chest, ma2025fully, wu2025large, luo2025ensemble, ni2025maintaining, kong2025mastering} on the image classification of fundus or X-ray. We utilize CLIP \cite{radford2021learning} as default vision-language pretraining. For the segmentation task, we compare proposed OPERA including TransUNet \cite{chen2024transunet}, U-Net \cite{ronneberger2015u}, V-Net \cite{milletari2016v} against 13 state-of-the-art baselines \cite{isensee2021nnu, xie2024unimiss+, perez2025exploring, ma2024segment, shan2025stpnet, wu2025large, wu2022mutual, basak2023pseudo, adiga2024anatomically, ma2024constructing, qi2024gradient, li2025scale, ni2025maintaining}.
\begin{figure*}[!ht]
  \begin{center}
    \centerline{\includegraphics[width=0.9\textwidth]{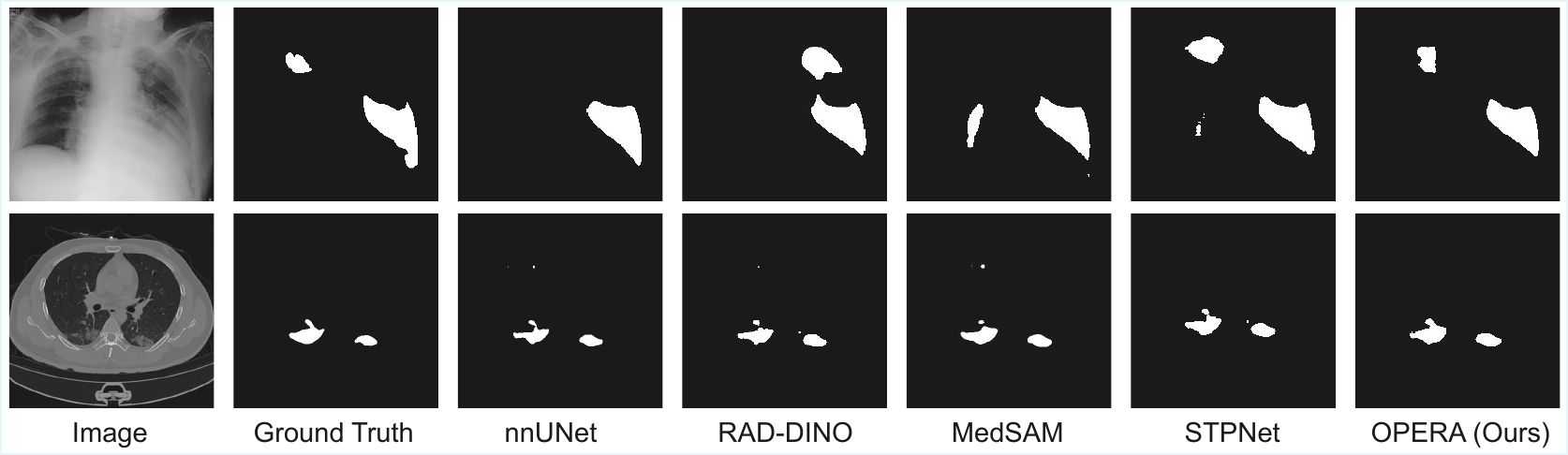}}
    \vspace{-4mm}
    \caption{Qualitative results on COVID-19 datasets. \textbf{Top:} COVID-Xray (QaTa-COV19). \textbf{Bottom:} COVID-CT (MosMedData+). OPERA yields cleaner lesion boundaries and better coverage of infection regions for low-contrast and irregular shapes.}
    \label{fig_results}
  \end{center}
  \vspace{-8mm}
\end{figure*}

\begin{figure*}[!ht]
  \begin{center}
    \centerline{\includegraphics[width=0.9\textwidth]{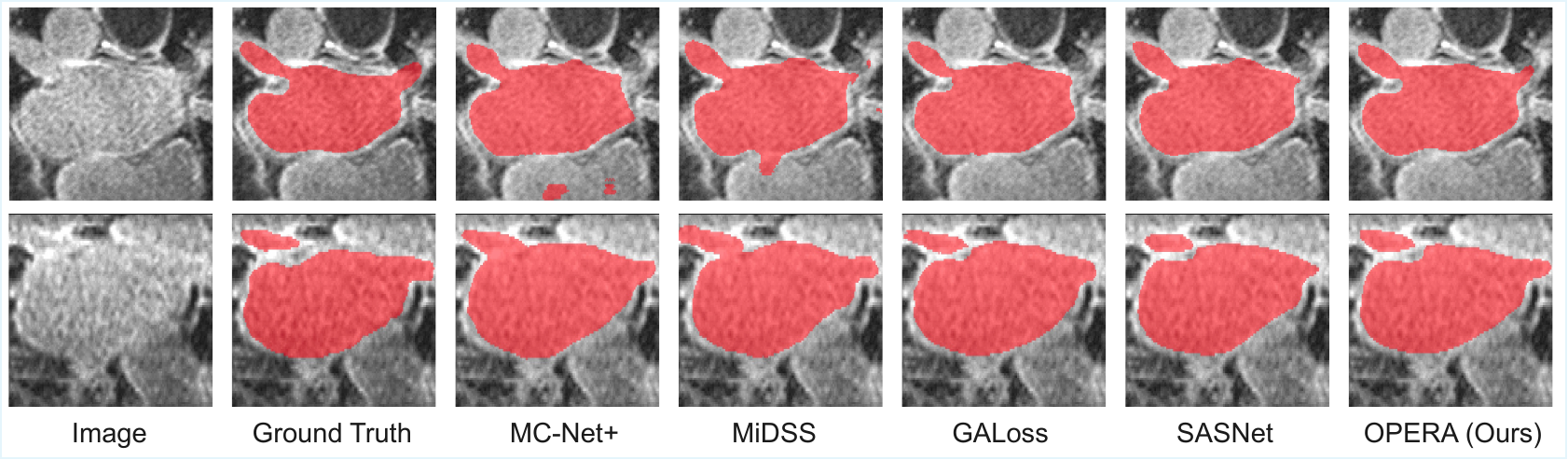}}
    \vspace{-4mm}
    \caption{Qualitative segmentation results on the LA-MRI dataset with 20\% labeled data.}
    \label{sup_results3}
  \end{center}
  \vspace{-8mm}
\end{figure*}

\subsection{Vision-Language Modeling for Image Classification}
We evaluate OPERA under the CLIP-style vision-language modeling setup on two ophthalmic classification benchmarks (RFMiD and OIA-DDR). Table~\ref{tab:performance_rfmid} and Table~\ref{tab:performance_oiaddr} report results under both linear probing and full fine-tuning. Overall, OPERA consistently achieves the top performance across AUC, ACC, and mAP, demonstrating that ensembling heterogeneous CLIP vision encoders yields more robust medical representations. On RFMiD (Table~\ref{tab:performance_rfmid}), OPERA reaches best performance with full fine-tuning, surpassing RET-CLIP by 3.35\% AUC and 7.66\% mAP. On OIA-DDR (Table~\ref{tab:performance_oiaddr}), OPERA also achieves the best results. With full fine-tuning, OPERA beats over the second-best method RETFound by 2.10\% AUC and 3.69\% mAP. The results of OPERA* shows that OPERA degrades gracefully to a labels-free mode while retaining strong performance compared with SOTA test-time adaptation method \cite{ni2025maintaining} and offline RL \cite{kong2025mastering}.

\begin{table}[!t]
\LARGE
    \centering
    \caption{The performance evaluation on Chest X-ray Classification (Chest X-Ray14) and Abdominal CT Classification (OrganSMNIST). \textbf{Bold} indicates best performance and \underline{underline} shows second-best. ELF* denotes our re-implementation of ELF utilizing $\mathbb{E}_1$, $\mathbb{E}_2$ and $\mathbb{E}_3$ models.} %25596, 8827
    \vspace{-4mm}
    \label{tab:performance_XrayCT}
    \resizebox{\linewidth}{!}{
        \begin{tabular}{l|ccc|ccc}
        \toprule
        \multirow{2}{*}{Method} & \multicolumn{3}{c|}{\textbf{Chest X-Ray14 (\%)}} & \multicolumn{3}{c}{\textbf{OrganSMNIST (\%)}} \\
                & AUC   & ACC   & mAP   & AUC   & ACC   & mAP \\ 
        \hline
        UniMiSS+ \textcolor{gray}{(TPAMI-24)} & 81.45 & 92.41 & 27.06 & 97.90 & 80.25 & 75.64\\
        RAD-DINO \textcolor{gray}{(NMI-25)} & 81.37 & 92.36 & 26.92 & 98.05 & 83.21 & 79.01 \\
        CheXFound \textcolor{gray}{(TMI-25)} & 81.63 & 92.45 & 27.11 & 97.79 & 79.26 & 73.88 \\
        Ark+ \textcolor{gray}{(Nature-25)} & \underline{82.42} & \underline{92.62} & 30.11 & 98.16 & 83.03 & 78.54 \\
        VoCo \textcolor{gray}{(TPAMI-25)}  & 82.30 & 92.57 & 30.03 & 98.37 & \underline{83.65} & 79.32 \\
        ELF* \textcolor{gray}{(Arxiv-25)} & 82.34 & 92.62 & \underline{30.24} & \underline{98.41} & 83.58 & \underline{79.44} \\\hline
        \textbf{OPERA (Ours)}& \textbf{83.05} & \textbf{92.64} & \textbf{30.58} & \textbf{98.83} & \textbf{84.31} & \textbf{80.12} \\\rowcolor{gray!20}
        ~~~~~~$\mathbb{E}_1$: ViT & 82.28 & 92.62 & 30.20 & 98.14 & 83.17 & 78.97 \\\rowcolor{gray!20}
        ~~~~~~$\mathbb{E}_2$: ResNet50 & 81.28 & 92.42 & 28.51 & 97.68 & 78.61 & 73.19 \\\rowcolor{gray!20}
         ~~~~~~$\mathbb{E}_3$: DenseNet121 & 81.29 & 92.50 & 27.43 & 97.85 & 82.54 & 77.03 \\
        \bottomrule
        \end{tabular}}
    \vspace{-6mm}
\end{table}

\subsection{Universal Biomedical Image Classification}

We further evaluate OPERA on Chest X-Ray14 for X-ray classification. As shown in Table~\ref{tab:performance_XrayCT}, OPERA achieves the best overall performance with \textbf{83.05\%} AUC, \textbf{92.64\%} ACC, and \textbf{30.58\%} mAP. OPERA improves over Ark+ (82.42\% AUC) and also surpasses the SOTA ensemble method ELF in AUC (83.05\% vs. 82.34\%), indicating better robustness. OrganSMNIST is used for abdominal CT classification. OPERA again performs best (Table~\ref{tab:performance_XrayCT}), reaching \textbf{98.83\%} AUC, \textbf{84.31\%} ACC, and \textbf{80.12\%} mAP, demonstrating strong generalization across imaging modalities and anatomical structures.

\subsection{COVID-19 X-ray and CT Segmentation}

\begin{table}[!t]
\LARGE
    \centering
    \caption{The performance evaluation on COVID-19 X-ray (QaTa-COV19) and COVID-19 CT (MosMedData+) Segmentation. \textbf{Bold} indicates best performance and \underline{underline} shows second-best. OPERA* denotes the model variant without using the labeled data to initialize model-level weights.}
    \vspace{-4mm}
    \label{tab:performance_covid}
    \resizebox{\linewidth}{!}{
        \begin{tabular}{l|cc|cc}
        \toprule
        \multirow{2}{*}{Method} & \multicolumn{2}{c|}{\textbf{QaTa-COV19}} & \multicolumn{2}{c}{\textbf{MosMedData+}} \\
                & Dice (\%) & Jaccard (\%)& Dice (\%)& Jaccard (\%)\\ 
        \hline
        nnUNet \textcolor{gray}{(Nat. Meth-21)} & 80.42 & 70.81 & 72.59 & 60.36 \\
        UniMiSS+ \textcolor{gray}{(TPAMI-24)} & 80.07 & 70.13 & 72.45 & 60.21 \\
        MedSAM \textcolor{gray}{(Nat. Com-24)} & 79.31 & 69.89 & 71.57 & 59.34 \\
        RAD-DINO \textcolor{gray}{(NMI-25)} & 80.58 & 71.38 & 75.13 & 62.55 \\
        TCA \textcolor{gray}{(CVPR-25)} & 80.37 & 71.06 & 75.34 & 62.93 \\
        STPNet \textcolor{gray}{(TIP-25)} & \underline{80.63} & \underline{71.42} & \underline{76.18} & \underline{63.41}\\
        VoCo \textcolor{gray}{(TPAMI-25)} & 80.19 & 70.38 & 73.08 & 60.53 \\\hline
        \textbf{OPERA (Ours)} & \textbf{82.12} & \textbf{73.26} & \textbf{77.03}  & \textbf{64.39} \\\rowcolor{gray!20}
        ~~~~~~$\mathbb{E}_1$: TransUNet & 78.63 & 69.13 & 71.24 & 58.44 \\\rowcolor{gray!20}
        ~~~~~~$\mathbb{E}_2$: U-Net & 79.02 & 69.46 & 64.60 & 50.73 \\\rowcolor{gray!20}
         ~~~~~~$\mathbb{E}_3$: V-Net & 78.24 & 68.71 & 67.75 & 54.46 \\
         OPERA* (Ours) & 81.54 & 72.74 & 76.80 & 63.95 \\
        \bottomrule
        \end{tabular}}
    \vspace{-8mm}
\end{table}

\begin{figure*}[!t]
  \begin{center}
    \centerline{\includegraphics[width=0.9\textwidth]{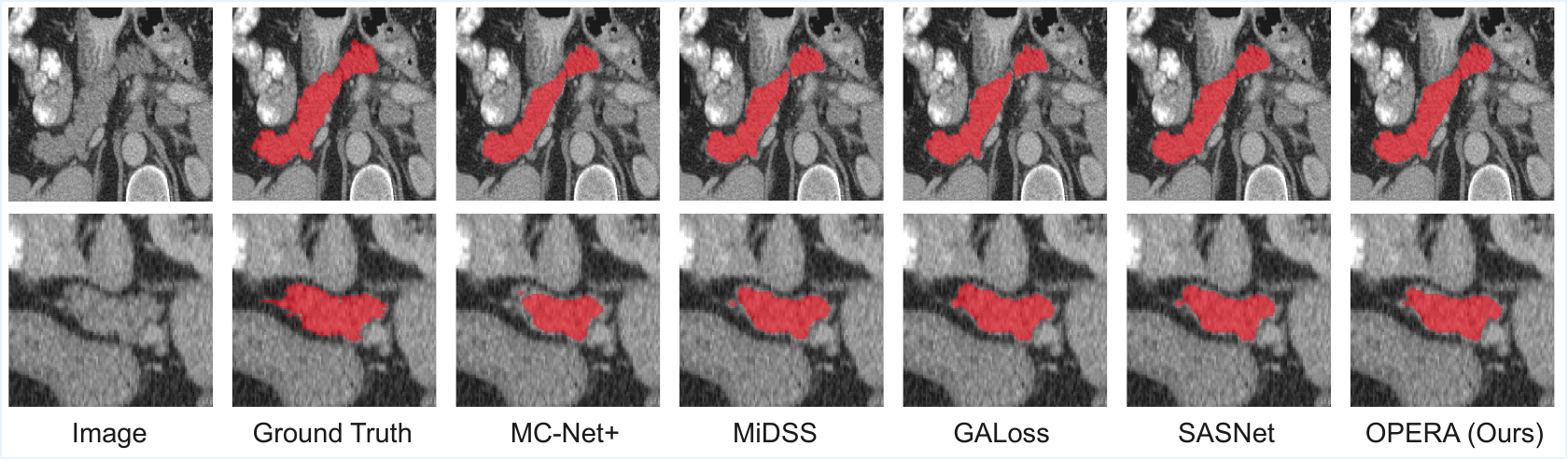}}
    \vspace{-4mm}
    \caption{Qualitative segmentation results on the Pancreas-CT dataset with 20\% labeled data.}
    \label{sup_results4}
  \end{center}
  \vspace{-7mm}
\end{figure*}
On the QaTa-COV19 COVID-Xray segmentation benchmark (Table~\ref{tab:performance_covid}), OPERA achieves the best performance with \textbf{82.12\%} Dice and \textbf{73.26\%} Jaccard, outperforming the strongest competing method STPNet. We attribute the gain to the ensemble of heterogeneous backbones, which reduces the typical over/under-segmentation artifacts of a single architecture and improves robustness to the low-contrast, fuzzy boundaries commonly observed in COVID-19 X-ray. As shown in Table~\ref{tab:performance_covid}, OPERA also ranks first, reaching \textbf{77.03\%} Dice and \textbf{64.39\%} Jaccard, surpassing STPNet (76.18\%/63.41\%) and other recent universal segmentation baselines. As visualized in Fig.~\ref{fig_results} (bottom row), OPERA better delineates scattered infection areas and preserves fine anatomical structures.

\vspace{-2mm}
\subsection{Universal Biomedical Image Segmentation with Limited Annotations}
Dense pixel-wise annotation is a major bottleneck for scaling universal segmentation to new organs and modalities. Following the limited-annotation protocol, we train all methods with only 20\% labeled data. We evaluate two representative 3D benchmarks: left atrial MRI (LA-MRI) and pancreas CT (Pancreas-CT) as shown in Table~\ref{tab:performance_semiseg}, Fig.~\ref{sup_results3}, and Fig.~\ref{sup_results4}. OPERA consistently outperforms recent semi-supervised baselines, suggesting that aggregating heterogeneous experts can provide more reliable predictions.
\begin{table}[!t]
\LARGE
    \centering
    \caption{The performance evaluation on left atrial MRI (LA-MRI) segmentation and pancreas CT (Pancreas-CT) segmentation with limited annotations (20\% labeled data). \textbf{Bold} indicates best performance and \underline{underline} shows second-best.}
    \vspace{-4mm}
    \label{tab:performance_semiseg}
    \resizebox{\linewidth}{!}{
        \begin{tabular}{l|cc|cc}
        \toprule
        \multirow{2}{*}{Method} & \multicolumn{2}{c|}{\textbf{LA-MRI}} & \multicolumn{2}{c}{\textbf{Pancreas-CT}} \\
                & Dice (\%) & Jaccard (\%) & Dice (\%) & Jaccard (\%)\\\hline
        MC-Net+ \textcolor{gray}{(MedIA-22)} & 90.12 & 82.12 & 79.05 & 65.83  \\
        PLGCL \textcolor{gray}{(ICCV-23)} & 90.01 & 82.04 & 78.41 & 65.17 \\
        AUSS \textcolor{gray}{(MedIA-24)} & 90.79 & 82.91 & 80.02 & 66.73 \\
        MiDSS \textcolor{gray}{(CVPR-24)} & 90.47 & 82.55 & 79.74 & 66.56 \\
        GALoss \textcolor{gray}{(ECCV-24)} & 90.39 & 82.46 & 80.21 & 66.92 \\
        TCA \textcolor{gray}{(CVPR-25)} & 90.67 & 83.16 & 80.73 & 67.88 \\
        SASNet \textcolor{gray}{(PR-26)}  & \underline{91.82} & \underline{84.93} & \underline{81.60} & \underline{69.39} \\\hline
        \textbf{OPERA (Ours)} & \textbf{92.71} & \textbf{85.62} & \textbf{82.44}  & \textbf{70.56} \\\rowcolor{gray!20}
        ~~~~~~$\mathbb{E}_1$: TransUNet & 86.75 & 76.94 & 73.61 & 60.89 \\\rowcolor{gray!20}
        ~~~~~~$\mathbb{E}_2$: U-Net & 85.61 & 75.53 & 70.83 & 57.12 \\\rowcolor{gray!20}
         ~~~~~~$\mathbb{E}_3$: V-Net & 86.03 & 76.06 & 71.52 & 57.68 \\
        \bottomrule
        \end{tabular}}
    \vspace{-6mm}
\end{table}

On LA-MRI, OPERA achieves \textbf{92.71\%} Dice and \textbf{85.62\%} Jaccard (Table~\ref{tab:performance_semiseg}), improving over the strongest baseline SASNet under the same 20\% labeling data. Fig.~\ref{sup_results3} illustrates the cases with an irregularly shaped atrium where MC-Net+ produces fragmented predictions and MiDSS misses a portion of the target region. OPERA consistently preserves the thin atrial boundaries and maintains geometric consistency, producing more complete and anatomically plausible segmentations despite the limited supervision. OPERA also ranks first with \textbf{82.44\%} Dice and \textbf{70.56\%} Jaccard on Pancreas-CT. It is particularly challenging due to low contrast and highly variable pancreas shapes. Fig.~\ref{sup_results4} shows a challenging case with elongated pancreatic anatomy where all other baseline methods produce incomplete segmentations that miss portions of the tail region. The bottom row depicts a case with low tissue contrast where OPERA can also segment the structures well. OPERA recovers more complete pancreas structures while suppressing spurious foreground regions, yielding cleaner boundaries and better coverage.

\begin{figure}[!t]
  \begin{center}
    \centerline{\includegraphics[width=0.9\columnwidth]{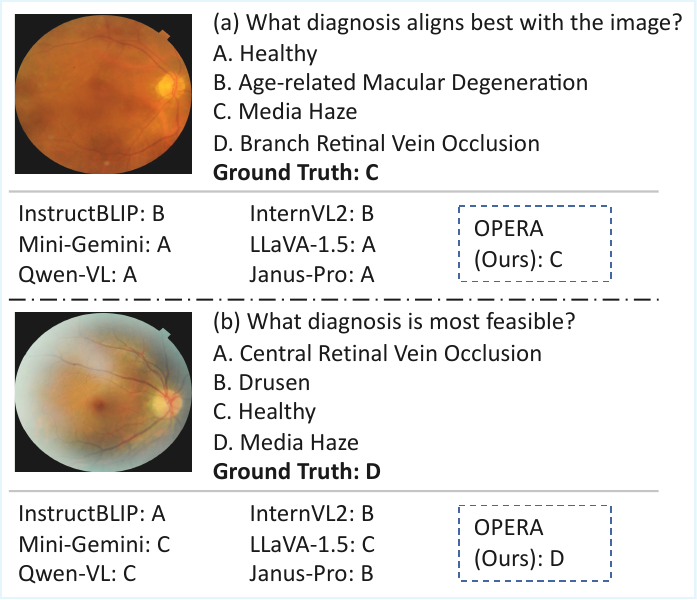}}
    \vspace{-5mm}
    \caption{Representative case studies of multimodal LLM diagnosis on retinal fundus images. Each case presents a multiple-choice question, the ground-truth disease label, and the predictions from several representative multimodal LLMs. OPERA corrects common confusions (e.g., AMD vs. media haze) by aggregating complementary expert opinions and can produce correct diagnosis from failure predictions.}
    \label{fig_results3}
  \end{center}
  \vspace{-10mm}
\end{figure}

\begin{table*}[!ht]
\LARGE
    \vspace{-2mm}
    \centering
    \caption{Comparison of multiple-choice accuracy in multimodal large language models on ten representative diseases. OPERA utilizes three expert agents with lower performance including InstructBLIP, Mini-Gemini, and Qwen-VL. \textbf{Bold} indicates best performance and \underline{underline} shows second-best.} 
    \vspace{-4mm}
    \resizebox{\textwidth}{!}{%
    \begin{tabular}{l|c|c|c|c|c|c|c|c|c|c|c}
    \toprule[0.7pt]
    Method & AMD & Cataract & CSR & DR & Glaucoma & Media Haze& Myopia & Retinitis & DME & Tessellation & \textbf{Average $\uparrow$}  \\
    \midrule
    InstructBLIP~\cite{instructblip} & 80.17\% & 80.00\% & 0.00\% & 76.51\% & 59.30\% & 16.13\% & 44.25\% & 11.11\% & 63.79\% & 41.67\% & 47.29\% \\
    Mini-Gemini~\cite{li2024mini} & 76.61\% & 85.00\% & 14.29\% & 79.87\% & 67.90\% & 38.71\% & 58.41\% & 33.33\% & 60.34\% & 33.33\% & 54.78\% \\
    Qwen-VL~\cite{bai2023qwen} & 81.87\% & 75.00\% & 28.57\% & 80.54\% & 78.40\% & 54.84\% & 76.55\% & 22.22\% & 84.48\% & 25.00\% & 60.75\% \\
    InternVL2~\cite{chen2024internvl} & 81.29\% & 85.00\% & \underline{71.43\%} & \textbf{94.63\%} & 89.51\% & \underline{64.52\%} & 88.05\% & \underline{44.44\%} & 87.93\% & \underline{66.67\%} & \underline{77.35\%} \\
    LLaVA-1.5~\cite{liu2024visual} & 83.04\% & \underline{90.00\%} & 42.86\% & 87.25\% & \underline{91.36\%} & 48.39\% & \underline{88.50\%} & 44.44\% & \textbf{93.10\%} & 58.33\% & 72.73\% \\
    Janus-Pro~\cite{chen2025janus} & \textbf{88.30\%} & 75.00\% & 42.86\% & 93.29\% & 90.74\% & 58.06\% & 87.17\% & 33.33\% & 62.07\% & 58.33\% & 68.92\% \\\hline
    \textbf{OPERA (Ours)} & \underline{87.13\%} & \textbf{90.00\%} & \textbf{71.43\%} & \underline{93.96\%} & \textbf{91.36\%} & \textbf{67.74\%} & \textbf{88.94\%} & \textbf{44.44\%} & \underline{91.38\%} & \textbf{66.67\%} & \textbf{79.31\%}\\
    \bottomrule
    \end{tabular}
    }
  \label{Tab:MLLM}%
  \vspace{-4mm}
\end{table*}

\vspace{-1mm}
\subsection{Universal Multimodal LLMs Diagnosis}
We further study whether OPERA can serve as a general multimodal assistant for clinical diagnosis, where the model must choose the most feasible disease label given a biomedical image and a set of candidate options. We evaluate representative multimodal LLMs on a ten-disease ophthalmic benchmark with the same multiple-choice prompting protocol. To apply OPERA's probabilistic fusion framework to MLLMs, we extract token-level log-probabilities from each model's output. Specifically, for a multiple-choice question with 4 candidate answers $\{A, B, C, D\}$, we prompt the MLLM to output the single answer token and retrieve log-probabilities $\ell_c$ assigned to each option token from the model's final softmax layer. These log-probabilities are then converted to normalized probabilities for use. Table~\ref{Tab:MLLM} reports multiple-choice accuracy for each disease and the overall average. Despite only using three relatively weaker models (InstructBLIP, Mini-Gemini, and Qwen-VL) as experts, OPERA achieves the best average accuracy of \textbf{79.31\%}, outperforming stronger individual MLLMs such as InternVL and Janus-Pro. OPERA is consistently competitive across categories and attains the best performance on most diseases. Fig.~\ref{fig_results3} provides representative examples, showing that OPERA can have correct diagnosis even when single model is misled by low-contrast or hazy appearances.

\subsection{Ablation Studies}
\paragraph{Ablation Study of Proposed Strategies.}
We first ablate each component in our inference pipeline on OIA-DDR classification (Table~\ref{tab:ablation_strategy}) and on COVID-19 segmentation (Table~\ref{tab:ablation_strategy_seg}). Starting from the EPM initialization, introducing DAA yields a clear improvement compared to the simple probability averaging. Adding PECC further boosts performance and achieves the best performance with IER. The same combination also improves Dice from 80.52\% to \textbf{82.12}\% on QaTa-COV19 and from 75.94\% to \textbf{77.03}\% on MosMedData+.

\begin{table}[!t]
\LARGE
    \centering
    \caption{Ablation study of different proposed strategies combination on OIA-DDR. The expert agent CLIP-\textit{X} denotes using \textit{X} as the vision encoder with CLIP framework.} %3759
    \vspace{-4mm}
    \label{tab:ablation_strategy}
    \resizebox{\linewidth}{!}{
        \begin{tabular}{l|ccc|ccc}
        \toprule
        \multirow{2}{*}{Method} & \multicolumn{3}{c|}{\textbf{Linear Probe (\%)}} & \multicolumn{3}{c}{\textbf{Fully Fine-tune (\%)}} \\
                & AUC   & ACC   & mAP   & AUC   & ACC   & mAP \\ 
        \hline\rowcolor{gray!20}
        $\mathbb{E}_1$: CLIP-ViT& 73.30 & 55.41 & 39.21 & 85.29 & 71.75 & 49.91 \\\rowcolor{gray!20}
        $\mathbb{E}_2$: CLIP-ResNet50 & 71.69 & 54.91 & 37.52 & 80.53 & 69.78 & 46.22 \\\rowcolor{gray!20}
        $\mathbb{E}_3$: CLIP-DenseNet121 & 72.73 & 55.20 & 38.19 & 81.46 & 69.94 & 47.09 \\\rowcolor{gray!20}
        Probability Averaging Ensemble& 74.26 & 57.14 & 40.33 & 85.33 & 71.85 & 50.14 \\\hline
        EPM & 81.69 & 64.11 & 48.02 & 85.75 & 72.17 & 50.81\\
        EPM + DAA & 82.83 & 66.80 & 49.15 & 86.03 & 72.47 & 53.64 \\
        EPM + DAA + PECC & 84.01 & 67.86 & 49.54 & 87.24 & 72.81 & 55.08\\
        DAA + PECC + IER & 83.18 & 67.07 & 49.33 & 86.30 & 72.52 & 54.29 \\
        EPM + DAA + PECC + IER & \textbf{84.95} & \textbf{69.57} & \textbf{49.81} & \textbf{88.06} & \textbf{72.89} & \textbf{55.90}\\
        \bottomrule
        \end{tabular}}
    \vspace{-4mm}
\end{table}

\begin{table}[!t]
\LARGE
    \centering
    \caption{Ablation study of different proposed strategies combination on COVID-19 X-ray (QaTa-COV19) and COVID-19 CT (MosMedData+) Segmentation.}
    \vspace{-4mm}
    \label{tab:ablation_strategy_seg}
    \resizebox{\linewidth}{!}{
        \begin{tabular}{l|cc|cc}
        \toprule
        \multirow{2}{*}{Method} & \multicolumn{2}{c|}{\textbf{QaTa-COV19}} & \multicolumn{2}{c}{\textbf{MosMedData+}} \\
                & Dice (\%) & Jaccard (\%) & Dice (\%) & Jaccard (\%) \\ 
        \hline\rowcolor{gray!20}
        $\mathbb{E}_1$: TransUNet & 78.63 & 69.13 & 71.24 & 58.44 \\\rowcolor{gray!20}
        $\mathbb{E}_2$: U-Net & 79.02 & 69.46 & 64.60 & 50.73 \\\rowcolor{gray!20}
        $\mathbb{E}_3$: V-Net & 78.24 & 68.71 & 67.75 & 54.46 \\\rowcolor{gray!20}
        Probability Averaging Ensemble & 79.37 & 69.81 & 71.95 & 59.13 \\\hline
        EPM & 80.52 & 71.35 & 75.94 & 63.20\\
        EPM + DAA & 81.47 & 72.46 & 76.51 & 63.68\\
        EPM + DAA + PECC & 81.85 & 73.10 & 76.87 & 64.06\\
        DAA + PECC + IER & 81.54 & 72.74 & 76.80 & 63.95 \\
        EPM + DAA + PECC + IER & \textbf{82.12} & \textbf{73.26} & \textbf{77.03}  & \textbf{64.39}\\
        \bottomrule
        \end{tabular}}
    \vspace{-4mm}
\end{table}

\vspace{-1mm}
\paragraph{Ablation Study of expert agent Composition.} We also study how the choice and number of experts affect OPERA (Table~\ref{tab:ablation_composition_cls}). Each individual CLIP expert performs notably worse than the ensemble. Composing two experts already brings large gains such as $\mathbb{E}_1+\mathbb{E}_3$. However, using all three experts consistently yields the best results, indicating that the experts provide complementary representations.

\begin{table}[!t]
\large
    \centering
    \caption{Ablation study of different expert agent composition on OIA-DDR. The expert agent CLIP-\textit{X} denotes using \textit{X} as the vision encoder with CLIP framework.} %3759
    \vspace{-4mm}
    \label{tab:ablation_composition_cls}
    \resizebox{\linewidth}{!}{
        \begin{tabular}{l|ccc|ccc}
        \toprule
        \multirow{2}{*}{Method} & \multicolumn{3}{c|}{\textbf{Linear Probe (\%)}} & \multicolumn{3}{c}{\textbf{Fully Fine-tune (\%)}} \\
                & AUC   & ACC   & mAP   & AUC   & ACC   & mAP \\ 
        \hline\rowcolor{gray!20}
        $\mathbb{E}_1$: CLIP-ViT& 73.30 & 55.41 & 39.21 & 85.29 & 71.75 & 49.91 \\\rowcolor{gray!20}
        $\mathbb{E}_2$: CLIP-ResNet50 & 71.69 & 54.91 & 37.52 & 80.53 & 69.78 & 46.22 \\\rowcolor{gray!20}
        $\mathbb{E}_3$: CLIP-DenseNet121 & 72.73 & 55.20 & 38.19 & 81.46 & 69.94 & 47.09 \\\hline
        ~~~~~~~~$\mathbb{E}_1$ + $\mathbb{E}_2$& 80.32 & 61.03 & 46.68 & 86.03 & 71.88 & 50.76\\
        ~~~~~~~~$\mathbb{E}_1$ + $\mathbb{E}_3$& 81.05 & 62.60 & 47.59 & 86.73 & 72.36 & 51.41\\
        ~~~~~~~~$\mathbb{E}_2$ + $\mathbb{E}_3$& 79.20 & 60.49 & 46.33 & 83.56 & 68.72 & 47.94\\
        ~~~~$\mathbb{E}_1$ + $\mathbb{E}_2$ + $\mathbb{E}_3$ & \textbf{84.95} & \textbf{69.57} & \textbf{49.81} & \textbf{88.06} & \textbf{72.89} & \textbf{55.90}\\
        \bottomrule
        \end{tabular}}
    \vspace{-4mm}
\end{table}

\vspace{-1mm}
\paragraph{Ablation Study of Blend Factors.} Finally, we ablate the blend factors $(\alpha,\beta)$ used in DAA for segmentation (Table~\ref{tab:ablation_factors}). Moderate blending performs best: setting $\alpha=0.10$ and $\beta=0.10$ achieves the highest performance on both datasets. In contrast, skewed blends (0.10, 0.05) or (0.05, 0.10) lead to slightly lower accuracy. And overly aggressive blending degrades performance, which indicates DAA benefits from balanced but conservative parameter updates.

\begin{table}[!t]
\large
    \centering
    \caption{Ablation study of blend factors in Distribution-Aware Adaptation on the COVID-19 X-ray (QaTa-COV19) and COVID-19 CT (MosMedData+) segmentation datasets.}
    \vspace{-4mm}
    \label{tab:ablation_factors}
    \resizebox{\linewidth}{!}{
        \begin{tabular}{l|cc|cc}
        \toprule
        \multirow{2}{*}{Method} & \multicolumn{2}{c|}{\textbf{QaTa-COV19}} & \multicolumn{2}{c}{\textbf{MosMedData+}} \\
                & Dice (\%) & Jaccard (\%) & Dice (\%) & Jaccard (\%) \\ 
        \hline\rowcolor{gray!20}
        $\mathbb{E}_1$: TransUNet & 78.63 & 69.13 & 71.24 & 58.44 \\\rowcolor{gray!20}
        $\mathbb{E}_2$: U-Net & 79.02 & 69.46 & 64.60 & 50.73 \\\rowcolor{gray!20}
        $\mathbb{E}_3$: V-Net & 78.24 & 68.71 & 67.75 & 54.46 \\\hline
        $\alpha=0.10$, $\beta=0.05$ & 82.10 & 73.25 & 76.91 & 64.32\\
        $\alpha=0.05$, $\beta=0.10$ & 82.05 & 73.12 & 76.97 & 64.35\\
        $\alpha=0.10$, $\beta=0.10$ & \textbf{82.12} & \textbf{73.26} & \textbf{77.03}  & \textbf{64.39}\\
        $\alpha=0.20$, $\beta=0.20$ & 81.97 & 73.01 & 76.72 & 64.11 \\
        \bottomrule
        \end{tabular}}
    \vspace{-6mm}
\end{table}

\vspace{-2mm}
\section{Conclusion}
In this work, we introduce \textbf{OPERA} (Offline Policy-guided Expert Routing and Adaptation), an offline-calibrated zero-retraining coordination framework that composes heterogeneous pre-trained agents at the inference time. By coupling validation-driven weight initialization with confidence calibration and test-time adaptive fusion, OPERA turns complementary expert behaviors into a single deployable predictor without any training parameter updates. Across nine benchmarks spanning fundus image, chest X-ray, CT, MRI, and multimodal diagnostic evaluation, OPERA consistently outperformed strong single-model and ensemble baselines in both classification and segmentation. Promising next steps include cost-aware expert selection and routing, extending the experts pool to broader modalities and tasks, and subgroup analyses for better real-world deployment and clinical interpretability.\\
\noindent\textbf{Acknowledgments}: This work was supported in part by the National Natural Science Foundation of China under Grant 62471418 and Fujian Provincial Natural Science Foundation of China under Grant 2024J01058.

\setlength{\abovedisplayskip}{3pt}
\setlength{\belowdisplayskip}{3pt}
\setlength{\abovedisplayshortskip}{3pt}
\setlength{\belowdisplayskip}{3pt}

%%
%% The next two lines define the bibliography style to be used, and
%% the bibliography file.
\bibliographystyle{ACM-Reference-Format}
\bibliography{main}

%%
%% If your work has an appendix, this is the place to put it.
\clearpage
\newpage

\appendix
\section{Theoretical Analysis of OPERA}
In this section, we provide theoretical justifications for the design choices in OPERA. We analyze how the multi-level weighting strategy, confidence calibration, and test-time adaptation collectively contribute to improved ensemble performance. Our analysis draws on classical bias-variance decomposition, expert diversity theory, and calibration-aware fusion principles. OPERA also defines the fixed routing policy learned from a static validation set.

\subsection{Bias-Variance Decomposition for Weighted Ensembles}
We begin by analyzing how weighted averaging of expert predictions can reduce prediction error. We adopt a probabilistic perspective where randomness arises from the sampling of test instances $(x, y) \sim \mathcal{D}$ from the data distribution $\mathcal{D}$, rather than from the models themselves. Since OPERA uses fixed pre-trained expert agents, each $f^{(m)}(x)$ is a deterministic function of $x$; the variance and covariance terms below therefore capture variability across the input distribution rather than model stochasticity. Consider a regression setting where each expert agent $m$ produces predictions $f^{(m)}(x)$ for input $x$, and let $y$ denote the true label. The expected squared error of a weighted ensemble with weights $\mathbf{w} = (w_1, \ldots, w_M)^\top$ satisfying $\sum_{m=1}^{M} w_m = 1$ can be decomposed as follows.

\begin{theorem}[Weighted Ensemble Error Decomposition]
\label{thm:bias_variance}
Let $\bar{f}(x) = \sum_{m=1}^{M} w_m f^{(m)}(x)$ be the weighted ensemble prediction. Under expectations taken over $(x, y) \sim \mathcal{D}$, the expected squared error admits the decomposition:
\begin{equation}
\mathbb{E}_{(x,y) \sim \mathcal{D}}[(y - \bar{f}(x))^2] = \underbrace{\text{Bias}^2}_{\text{(i)}} + \underbrace{\text{Var}_{\text{ens}}}_{\text{(ii)}} + \underbrace{\sigma^2_y}_{\text{(iii)}},
\end{equation}
where:
\begin{align}
&\text{(i)} \quad \text{Bias}^2 
= \left(
\mathbb{E}_{x}[y|x] 
- \sum_{m=1}^{M} w_m \mathbb{E}_{x}[f^{(m)}(x)]
\right)^2, \\
&\text{(ii)} \quad \text{Var}_{\text{ens}} 
= \sum_{m=1}^{M} w_m^2 \text{Var}_{x}(f^{(m)}(x)) \\
&\quad + 2\sum_{m<m'} w_m w_{m'} 
\text{Cov}_{x}(f^{(m)}(x), f^{(m')}(x)), \\
&\text{(iii)} \quad \sigma^2_y 
= \mathbb{E}_{(x,y)}[(y - \mathbb{E}[y|x])^2] 
\quad \text{(irreducible noise)}.
\end{align}
\end{theorem}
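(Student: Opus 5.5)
The plan is to condition on $x$ and split the error into a noise part and a model part, then split the model part into a squared mean and a variance over $x$. Write $g(x) := \mathbb{E}[y\mid x]$ and add and subtract it:
\begin{equation*}
y-\bar f(x) = \bigl(y-g(x)\bigr) + \bigl(g(x)-\bar f(x)\bigr).
\end{equation*}
Expanding the square and taking $\mathbb{E}_{(x,y)\sim\mathcal D}$ gives three pieces. The cross term vanishes by the tower property: conditional on $x$, the factor $g(x)-\bar f(x)$ is deterministic, because each $f^{(m)}$ is a fixed function of $x$. Meanwhile $\mathbb{E}[y-g(x)\mid x]=0$. What remains is $\mathbb{E}[(y-g(x))^2]=\sigma^2_y$, which is term (iii), plus $\mathbb{E}_x[(g(x)-\bar f(x))^2]$. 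Only square-integrability of $y$ and of each $f^{(m)}(x)$ is needed.

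Next, apply $\mathbb{E}[Z^2]=(\mathbb{E}Z)^2+\mathrm{Var}(Z)$ to $Z=g(x)-\bar f(x)$. Linearity of expectation together with $\sum_m w_m=1$ gives
\begin{equation*}
\mathbb{E}_x Z=\mathbb{E}_x[g(x)]-\sum_m w_m\,\mathbb{E}_x[f^{(m)}(x)],
\end{equation*}
so $(\mathbb{E}_xZ)^2$ is exactly term (i). For the variance, bilinearity of covariance gives
\begin{equation*}
\mathrm{Var}_x(\bar f)=\sum_m w_m^2\,\mathrm{Var}_x(f^{(m)})+2\sum_{m<m'}w_mw_{m'}\,\mathrm{Cov}_x\bigl(f^{(m)},f^{(m')}\bigr),
\end{equation*}
and the right-hand side is exactly term (ii).

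The main obstacle is that $\mathrm{Var}_x(Z)$ is not $\mathrm{Var}_x(\bar f)$ in general. Precisely,
\begin{equation*}
\mathrm{Var}_x(Z)=\mathrm{Var}_x(\bar f)+\mathrm{Var}_x(g)-2\,\mathrm{Cov}_x(g,\bar f).
\end{equation*}
So the identity in Theorem~\ref{thm:bias_variance} holds exactly only when these extra terms cancel. The cleanest sufficient condition is that $g$ is constant on the support of $\mathcal D$. That is the reading implicit in writing $\mathbb{E}_x[y\mid x]$ as a single number in (i). I would adopt this reading explicitly.

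As a fallback if a constant target is too restrictive, I would state the general identity with the extra term $\mathrm{Var}_x(g)-2\,\mathrm{Cov}_x(g,\bar f)$ shown explicitly. Equivalently, (ii) can be replaced by the variance of the residual $g-\bar f$, which expands as the same weighted sum with $f^{(m)}$ replaced by the residuals $f^{(m)}-g$, using $\sum_m w_m=1$. The qualitative conclusion the appendix needs is unaffected by this choice: weak or negative covariances among experts reduce the weighted error.
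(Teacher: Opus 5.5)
Your route is the paper's own. You add and subtract $\mu(x)=\mathbb{E}[y\mid x]$, eliminate the cross term by conditioning on $x$, and then split $\mathbb{E}_x[(\mu(x)-\bar f(x))^2]$ into a squared mean plus a fluctuation term. The paper expands raw second moments instead of using $\mathbb{E}[Z^2]=(\mathbb{E}Z)^2+\mathrm{Var}(Z)$, but that difference is only cosmetic.

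Your diagnosis of the obstacle is correct, and it locates a real gap in the paper's proof. The paper says that collecting the mean terms yields $(\bar\mu-\sum_m w_m\bar f_m)^2$ and that the remaining terms give the variance--covariance structure. In doing so it silently discards the leftover $\mathrm{Var}_x(\mu)-2\sum_m w_m\,\mathrm{Cov}_x(\mu,f^{(m)})=\mathrm{Var}_x(\mu)-2\,\mathrm{Cov}_x(\mu,\bar f)$, which comes from $\mathbb{E}[\mu^2]$ and $\mathbb{E}[\mu f^{(m)}]$.

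The statement as written is false in general. Suppose every expert equals a nonconstant $\mu$. Then the left side is $\sigma_y^2$ and term (i) vanishes, but term (ii) equals $(\sum_m w_m)^2\mathrm{Var}_x(\mu)=\mathrm{Var}_x(\mu)>0$. So something extra is genuinely required: either an added hypothesis (constant $\mu$, or more generally $\mathrm{Var}_x(\mu)=2\,\mathrm{Cov}_x(\mu,\bar f)$), or your residual form of (ii).

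Two small corrections to your write-up:
\begin{itemize}
\item The constant-target reading is not implicit in the notation. The paper's proof interprets $\mathbb{E}_x[y\mid x]$ in (i) as $\bar\mu=\mathbb{E}_x[\mu(x)]=\mathbb{E}[y]$, which is a single number for any $\mu$. Present constancy of $\mu$ as an extra assumption, not as a reading of the statement.
\item You do not need $\sum_m w_m=1$ to compute $\mathbb{E}_x Z$; linearity suffices. The constraint is needed only in your fallback, to write $\mu-\bar f=-\sum_m w_m(f^{(m)}-\mu)$.
\end{itemize}
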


\begin{proof}
We provide a complete derivation. Let $\mu(x) = \mathbb{E}[y|x]$ denote the true conditional mean. The squared error decomposes as:
\begin{align}
\mathbb{E}[(y - \bar{f}(x))^2] 
&= \mathbb{E}[(y - \mu(x) + \mu(x) - \bar{f}(x))^2] \\
&= \mathbb{E}[(y - \mu(x))^2] 
+ \mathbb{E}[(\mu(x) - \bar{f}(x))^2] \\
&\quad + 2\mathbb{E}[(y - \mu(x))(\mu(x) - \bar{f}(x))].
\end{align}
The cross-term vanishes since $\mathbb{E}[y - \mu(x) | x] = 0$, yielding:
\begin{equation}
\mathbb{E}[(y - \bar{f}(x))^2] = \underbrace{\mathbb{E}[(y - \mu(x))^2]}_{\sigma^2_y} + \underbrace{\mathbb{E}[(\mu(x) - \bar{f}(x))^2]}_{\text{reducible error}}.
\end{equation}
The reducible error term can be further expanded. Define $\bar{\mu} = \mathbb{E}_x[\mu(x)]$ and $\bar{f}_m = \mathbb{E}_x[f^{(m)}(x)]$. Then:
\begin{align}
\mathbb{E}[(\mu(x) - \bar{f}(x))^2] 
&= \mathbb{E}\left[
\left(\mu(x) - \sum_{m=1}^{M} w_m f^{(m)}(x)\right)^2
\right] \\
&= \mathbb{E}[\mu(x)^2] 
- 2\sum_{m=1}^{M} w_m \mathbb{E}[\mu(x) f^{(m)}(x)] \\
&\quad + \mathbb{E}\left[
\left(\sum_{m=1}^{M} w_m f^{(m)}(x)\right)^2
\right].
\end{align}
For the last term:
\begin{equation}
\begin{aligned}
\mathbb{E}\Bigg[
\left(\sum_{m=1}^{M} w_m f^{(m)}(x)\right)^2
\Bigg]
&= \sum_{m=1}^{M} w_m^2 \mathbb{E}[(f^{(m)}(x))^2] \\
&\quad + 2\sum_{m<m'} w_m w_{m'} 
\mathbb{E}[f^{(m)}(x) f^{(m')}(x)] \\
&= \sum_{m=1}^{M} w_m^2 
\left[\mathrm{Var}_x(f^{(m)}(x)) + \bar{f}_m^2\right] \\
&\quad + 2\sum_{m<m'} w_m w_{m'} 
\Big[
\mathrm{Cov}_x(f^{(m)}, f^{(m')}) \\
&\qquad\qquad + \bar{f}_m \bar{f}_{m'}
\Big].
\end{aligned}
\end{equation}

Collecting the mean terms yields the squared bias $(\bar{\mu} - \sum_m w_m \bar{f}_m)^2$, and the remaining terms give the variance-covariance structure, completing the decomposition.
\end{proof}

\paragraph{Implications for OPERA.} This decomposition reveals two key insights:
\begin{enumerate}
    \item \textbf{Variance reduction through diversification}: The ensemble variance term $\sum_{m=1}^{M} w_m^2 \text{Var}(f^{(m)})$ is minimized when weights are uniform, assuming equal individual variances. More importantly, when expert agents produce negatively correlated or uncorrelated errors (i.e., $\text{Cov}(f^{(m)}, f^{(m')}) \leq 0$), the covariance term further reduces ensemble variance. OPERA's use of architecturally diverse experts (ViT, ResNet, DenseNet) promotes such error decorrelation.
    
    \item \textbf{Bias-variance trade-off in weight selection}: Concentrating weights on better-performing models may reduce bias but increases the effective variance (since $\sum_m w_m^2$ increases with weight concentration). OPERA's EPM navigates this trade-off using temperature-scaled softmax with $\tau = 10.0$, which moderately concentrates weights on higher-performing models while preserving sufficient diversity to benefit from variance reduction.
\end{enumerate}

\subsection{Expert Diversity and Error Decorrelation}

The effectiveness of ensemble methods critically depends on the diversity of constituent models~\cite{liu2019accurate}. We formalize this through the following pairwise analysis of prediction disagreement.

\begin{definition}[Pairwise Disagreement]
For two models $m$ and $m'$, the pairwise disagreement on sample $i$ is defined as:
\begin{equation}
d_{m,m'}(i) = \frac{1}{C} \sum_{c=1}^{C} \left(p^{(m)}_{i,c} - p^{(m')}_{i,c}\right)^2.
\end{equation}
The ensemble diversity is the expected pairwise disagreement:
\begin{equation}
\mathcal{D} = \frac{2}{M(M-1)} \sum_{m<m'} \mathbb{E}_i[d_{m,m'}(i)].
\end{equation}
where $M=2$ denotes the number of models in each pair (distinct from the $M=3$ ensemble expert agents defined earlier).
\end{definition}

\begin{proposition}[Diversity-Error Relationship]
\label{prop:diversity}
Let $\bar{e}$ denote the average individual model error and $e_{\text{ens}}$ denote the ensemble error. Under squared loss, the ensemble error satisfies:
\begin{equation}
e_{\text{ens}} = \bar{e} - \mathcal{D},
\end{equation}
where the ensemble uses uniform weights.
\end{proposition}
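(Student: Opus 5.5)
The plan is to prove the statement as an instance of the classical ambiguity decomposition (Krogh--Vedelsby), per sample and per class, and then average. Fix a sample $i$ and a class $c$, let $y_{i,c}$ be the target, and write $\bar{p}_{i,c}=\frac{1}{M}\sum_{m}p^{(m)}_{i,c}$ for the uniform-weight ensemble prediction. This is the special case $w_m=1/M$ of Theorem~\ref{thm:bias_variance}'s $\bar{f}$. The errors are
\begin{equation}
\bar{e}=\mathbb{E}_i\Big[\frac{1}{MC}\sum_{m,c}(p^{(m)}_{i,c}-y_{i,c})^2\Big],\qquad e_{\text{ens}}=\mathbb{E}_i\Big[\frac{1}{C}\sum_c(\bar{p}_{i,c}-y_{i,c})^2\Big].
\end{equation}

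\textbf{Step 1.} I would expand $p^{(m)}_{i,c}-y_{i,c}=(p^{(m)}_{i,c}-\bar{p}_{i,c})+(\bar{p}_{i,c}-y_{i,c})$, square it, and average over $m$. The cross term vanishes because $\sum_m(p^{(m)}_{i,c}-\bar{p}_{i,c})=0$. This is the same cancellation used for the cross term in the proof of Theorem~\ref{thm:bias_variance}, but here it is exact and holds pointwise, with no conditioning. The result is
\begin{equation}
\frac{1}{M}\sum_m(p^{(m)}_{i,c}-y_{i,c})^2=(\bar{p}_{i,c}-y_{i,c})^2+\frac{1}{M}\sum_m(p^{(m)}_{i,c}-\bar{p}_{i,c})^2 .
\end{equation}
Averaging over $c$ and taking $\mathbb{E}_i$ gives $e_{\text{ens}}=\bar{e}-A$, where $A$ is the expected spread of the experts around their mean (the ambiguity).

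\textbf{Step 2.} I would convert $A$ into pairwise disagreements using the identity
\begin{equation}
\frac{1}{M}\sum_m(a_m-\bar{a})^2=\frac{1}{2M^2}\sum_{m,m'}(a_m-a_{m'})^2=\frac{1}{M^2}\sum_{m<m'}(a_m-a_{m'})^2 .
\end{equation}
Applying it with $a_m=p^{(m)}_{i,c}$, then averaging over $c$ and $i$, gives $A=\frac{1}{M^2}\sum_{m<m'}\mathbb{E}_i[d_{m,m'}(i)]$.

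\textbf{Main obstacle: matching the normalisation of $\mathcal{D}$.} The definition of $\mathcal{D}$ uses the pair-average factor $\frac{2}{M(M-1)}$, so Step 2 actually yields $A=\frac{M-1}{M}\mathcal{D}$, not $A=\mathcal{D}$. The exact identity is therefore $e_{\text{ens}}=\bar{e}-\frac{M-1}{M}\mathcal{D}$. The clean form in Proposition~\ref{prop:diversity} holds only if $\mathcal{D}$ is read as the ambiguity $A$, or equivalently if its normaliser is taken to be $\frac{1}{M^2}$ rather than $\frac{2}{M(M-1)}$.

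I would resolve this explicitly rather than hide it. I would prove the exact identity with the factor $\frac{M-1}{M}$, and then state that the proposition holds verbatim under the ambiguity normalisation of $\mathcal{D}$. For $M=2$ the factor is $\tfrac12$, and for the paper's $M=3$ experts it is $\tfrac23$. In either case the qualitative conclusion used in the paper is unaffected: $e_{\text{ens}}\le\bar{e}$, with a gap that grows with expert diversity.
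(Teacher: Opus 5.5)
The paper gives no derivation for this proposition; it only points to the ambiguity decomposition. Your route is therefore the natural one, and your Steps 1 and 2 are correct as written.

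You are also right that the proposition cannot hold verbatim with the normaliser $\frac{2}{M(M-1)}$. Take $M=2$, $C=1$, $y=1$, $p^{(1)}=1$, $p^{(2)}=0$. Then $\bar e=\tfrac12$, $e_{\text{ens}}=\tfrac14$ and $\mathcal{D}=1$, so $\bar e-\mathcal{D}$ is even negative. The paper's parenthetical about $M=2$ does not rescue it. A genuine two-model ensemble gives $A=\tfrac14\mathcal{D}$. Plugging $M=2$ into the normaliser while summing over the three experts' pairs gives $A=\tfrac19\mathcal{D}$.

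The gap is in the step you single out as the main obstacle. There are $\frac{M(M-1)}{2}$ unordered pairs, so $\sum_{m<m'}\mathbb{E}_i[d_{m,m'}(i)]=\frac{M(M-1)}{2}\mathcal{D}$. Your Step 2 formula then gives
\begin{equation*}
A=\frac{1}{M^2}\cdot\frac{M(M-1)}{2}\,\mathcal{D}=\frac{M-1}{2M}\,\mathcal{D},
\end{equation*}
not $\frac{M-1}{M}\mathcal{D}$; you lost a factor of $2$. The exact identity is $e_{\text{ens}}=\bar e-\frac{M-1}{2M}\mathcal{D}$. The factor is $\tfrac14$ for $M=2$ and $\tfrac13$ for $M=3$, not $\tfrac12$ and $\tfrac23$.

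Two checks confirm this:
\begin{itemize}
\item In the $M=2$ example above, $\tfrac12-\tfrac14\cdot 1=\tfrac14=e_{\text{ens}}$, whereas your factor predicts $0$.
\item For $M=3$, take $(p^{(1)},p^{(2)},p^{(3)})=(1,0,0)$ and $y=1$. Then $\bar e=\tfrac23$, $\mathcal{D}=\tfrac23$, and $\tfrac23-\tfrac13\cdot\tfrac23=\tfrac49=e_{\text{ens}}$.
\end{itemize}

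Your alternative resolution is fine: renormalising the pairwise sum by $\frac{1}{M^2}$ makes $\mathcal{D}=A$, which is consistent with the corrected constant. The qualitative conclusion is also unaffected: $e_{\text{ens}}\le\bar e$, with a gap that grows with diversity. As written, however, the ``exact identity'' you propose to prove is false and needs the factor $\frac{M-1}{2M}$.
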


As known as the ambiguity decomposition \cite{Kuncheva2003}, this result implies that higher diversity directly translates to lower ensemble error, provided individual model accuracy is maintained. OPERA leverages this principle by combining architecturally diverse experts: ViT (attention-based), ResNet50 (residual connections), and DenseNet121 (dense connections). These architectural differences induce different inductive biases, leading to decorrelated errors across models.

For the Instance-level Expert Routing (IER) module, we exploit the relationship between inter-model agreement and prediction reliability. The agreement score defined in Eq.~(\ref{eq14}) of the main text quantifies instantaneous diversity:
\begin{equation}
a_i = \frac{2}{M(M-1)}\sum_{m=1}^{M-1}\sum_{m'=m+1}^{M} \text{sim}(m, m', i).
\end{equation}

\paragraph{Agreement-Reliability Heuristic.}
We provide intuition for why high inter-model agreement suggests reliable ensemble predictions, based on a standard result from ensemble theory.

\begin{proposition}[Majority Vote Reliability]
\label{prop:agreement}
Consider an ensemble of $M$ models making independent predictions, where each model has probability $p > 0.5$ of being correct. The probability that the majority vote is correct is given by:
\begin{equation}
\Pr(\text{majority correct}) = \sum_{k=\lceil M/2 \rceil}^{M} \binom{M}{k} p^k (1-p)^{M-k}.
\end{equation}
For $M = 3$ models with individual accuracy $p$, this simplifies to:
\begin{equation}
\Pr(\text{majority correct}) = 3p^2(1-p) + p^3 = 3p^2 - 2p^3.
\end{equation}
\end{proposition}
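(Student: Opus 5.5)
We model correctness of the $M$ models on a fixed instance as indicator variables $Z_1,\ldots,Z_M$, with $Z_m=1$ when model $m$ is correct. By the independence assumption in Proposition~\ref{prop:agreement}, these are i.i.d.\ Bernoulli$(p)$. The number of correct models is then $S=\sum_{m=1}^M Z_m$, and $S$ has the $\mathrm{Binomial}(M,p)$ distribution.

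\emph{Step 1: reduce the event to a tail event.} We read ``the majority vote is correct'' as: at least $\lceil M/2\rceil$ of the models output the correct label. This is the standard binary or majority-of-correct interpretation. When $M$ is odd, as in the case of interest $M=3$, this is exactly the event that more than half the votes go to the correct label, so there are no ties. When $M$ is even, the threshold $\lceil M/2\rceil=M/2$ counts an exact tie as correct. We adopt that as the convention implied by the displayed sum.

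\emph{Step 2: compute the tail.} Summing the binomial probability mass function over $k\ge\lceil M/2\rceil$ gives
\[
\Pr(S\ge\lceil M/2\rceil)=\sum_{k=\lceil M/2\rceil}^M\binom{M}{k}p^k(1-p)^{M-k},
\]
which is the general formula in the statement.

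\emph{Step 3: specialize to $M=3$.} Here $\lceil 3/2\rceil=2$, so the sum has two terms. The $k=2$ term is $\binom{3}{2}p^2(1-p)=3p^2(1-p)$, and the $k=3$ term is $p^3$. Adding them gives $3p^2-3p^3+p^3=3p^2-2p^3$.

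As a sanity check connecting this to the heuristic, we verify that $3p^2-2p^3>p$ for $p\in(0.5,1)$. The difference factors as $-p(2p-1)(p-1)=p(2p-1)(1-p)$, which is positive on that interval. So the majority vote beats every individual model.

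The only delicate point is Step 1: pinning down what ``majority correct'' means in the multiclass setting. Two or more incorrect models could agree on the same wrong label, or votes could split in a way that gives a plurality without a majority. Our tail-event reading sidesteps this, because at least $\lceil M/2\rceil$ correct votes already guarantee a correct majority whenever $M$ is odd. We will state that interpretation explicitly; everything after it is routine binomial arithmetic.
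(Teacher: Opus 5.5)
Your proposal is correct and is essentially the paper's argument: the paper gives no separate proof beyond calling this the standard binomial result, i.e.\ summing the $\mathrm{Binomial}(M,p)$ mass over $k\ge\lceil M/2\rceil$ for i.i.d.\ Bernoulli$(p)$ correctness indicators and specializing to $M=3$, which is exactly your Steps 2--3. Your explicit conventions (ties counted as correct for even $M$, and ``majority correct'' read as at least $\lceil M/2\rceil$ correct votes in the multiclass case) make precise what the paper leaves implicit, and your factorization $3p^2-2p^3-p=p(2p-1)(1-p)$ justifies the paper's follow-up remark that majority voting beats a single model for $p\in(\tfrac12,1)$; it also shows that this absolute gain vanishes at both endpoints and so is not monotone in $p$.
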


This standard binomial result shows that when individual models are better than random ($p > 0.5$), the majority vote accuracy exceeds individual accuracy, and this advantage grows with $p$. While the independence assumption is often violated in practice (since models may share similar biases or training data), high empirical agreement among diverse expert architectures provides a practical signal that the prediction is likely reliable.

\paragraph{Practical Interpretation for IER.} Rather than relying on the idealized independence assumption, IER uses inter-model agreement $a_i$ as a \textit{heuristic indicator} of prediction reliability:
\begin{itemize}
    \item \textbf{High agreement} ($a_i > 0.90$ and $\sigma^2_i < 0.03$): When architecturally diverse experts converge on similar predictions, this empirical consensus suggests the instance lies in a region of input space where all models perform well. In such cases, IER uses the validation-derived base weights $\mathbf{w}_{\text{model}}$, trusting the ensemble's collective judgment.
    \item \textbf{Low agreement} ($a_i < 0.60$ or $\sigma^2_i > 0.12$): When experts disagree substantially, the instance likely presents ambiguity or lies near decision boundaries. Here, IER applies entropy-based weighting to prioritize more confident predictions, serving as a tie-breaking mechanism.
\end{itemize}
The specific thresholds (0.90, 0.60, 0.03, 0.12) were selected based on empirical validation across our benchmark datasets, and the effectiveness of this conditional weighting strategy is demonstrated in the ablation studies (Table~\ref{tab:ablation_strategy} and Table~\ref{tab:ablation_strategy_seg}).

\subsection{Calibration Theory and Optimal Fusion}

Proper calibration is essential for meaningful probability fusion~\cite{cao2024predictive}. We formalize the role of Per-Expert Confidence Calibration (PECC) in achieving well-calibrated ensemble predictions.

\begin{definition}[Calibration Error]
A model is perfectly calibrated if for all predicted probability values $p \in [0,1]$:
\begin{equation}
\Pr(Y = 1 | \hat{p} = p) = p.
\end{equation}
The Expected Calibration Error (ECE) measures deviation from perfect calibration:
\begin{equation}
\text{ECE} = \mathbb{E}_{\hat{p}}\left[|\Pr(Y = 1 | \hat{p}) - \hat{p}|\right].
\end{equation}
\end{definition}

Temperature scaling transforms predictions via $\hat{p}_T = \sigma(z/T)$ where $z = \sigma^{-1}(\hat{p})$ is the logit. The following result characterizes optimal temperature selection.

\begin{proposition}[Temperature Scaling Properties]
\label{prop:temperature}
Let $\hat{p}$ be a model's predicted probability with sharpness $s = \mathbb{E}[|\hat{p} - 0.5|]$. Temperature scaling with $T > 1$ (softening) reduces sharpness, while $T < 1$ (sharpening) increases sharpness. Specifically:
\begin{equation}
\frac{\partial s_T}{\partial T} < 0 \quad \text{for all } T > 0,
\end{equation}
where $s_T$ is the sharpness after temperature scaling.
\end{proposition}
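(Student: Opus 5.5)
The plan is to reduce the statement to a pointwise fact about a single logit and then pass the derivative through the expectation. Write $z=\sigma^{-1}(\hat p)$ for the logit; it is finite, since the paper's $\epsilon$-regularised inverse sigmoid keeps $\hat p$ away from $0$ and $1$. Define $s_T=\mathbb{E}\bigl[\,|\sigma(z/T)-\tfrac12|\,\bigr]$, so that $s_1=s$. The first step uses the symmetry $\sigma(-u)=1-\sigma(u)$. It gives $|\sigma(u)-\tfrac12| = \sigma(|u|)-\tfrac12$ for every real $u$, and this removes the absolute value. Hence
\begin{equation*}
s_T = \mathbb{E}\bigl[\sigma(|z|/T)\bigr]-\tfrac12 ,
\end{equation*}
and the integrand $g_z(T)=\sigma(|z|/T)$ is smooth in $T>0$ for each fixed $z$.

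Second, I differentiate pointwise. By the chain rule,
\begin{equation*}
\frac{\partial g_z}{\partial T} = -\,\sigma'(|z|/T)\,\frac{|z|}{T^2},
\qquad \sigma'(u)=\sigma(u)(1-\sigma(u))>0 .
\end{equation*}
This is strictly negative whenever $z\neq 0$ and equal to zero when $z=0$. So each individual prediction's distance from $0.5$ is nonincreasing in $T$, and strictly decreasing unless the prediction is exactly $0.5$.

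Third, I justify exchanging $\partial/\partial T$ with $\mathbb{E}$; I expect this to be the only step that needs care. My first attempt is to avoid any moment assumption on $z$ by writing $u=|z|/T$. Then
\begin{equation*}
\Bigl|\frac{\partial g_z}{\partial T}\Bigr| = \frac{u\,\sigma'(u)}{T},
\end{equation*}
and $\kappa:=\sup_{u\ge 0} u\,\sigma'(u)$ is finite, roughly $0.224$. On any compact interval $[T_0,T_1]\subset(0,\infty)$ the derivative is therefore dominated by the constant $\kappa/T_0$. Dominated convergence, through the mean value theorem applied to difference quotients, then gives
\begin{equation*}
\frac{\partial s_T}{\partial T} = -\,\mathbb{E}\Bigl[\sigma'(|z|/T)\,\frac{|z|}{T^2}\Bigr].
\end{equation*}
This expectation is strictly positive exactly when $\Pr(z\neq0)>0$, that is, when $\hat p$ is not almost surely equal to $0.5$. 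I will state this nondegeneracy condition explicitly as the hypothesis under which the strict inequality holds. In the degenerate case $\hat p\equiv 0.5$ the derivative is $0$, because every $T$ leaves the prediction unchanged.

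Finally, strict monotonicity of $T\mapsto s_T$ on $(0,\infty)$ yields the qualitative claims. For $T>1$ we get $s_T<s_1=s$ (softening), and for $T<1$ we get $s_T>s$ (sharpening). This also justifies PECC's rule of using $T_m=1.5$ for overly sharp experts and $T_m=0.7$ for overly flat ones. The whole argument applies verbatim to the batch average $s_m$, since it is an empirical expectation and the dominated-convergence step becomes trivial for finite sums.
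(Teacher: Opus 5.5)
Your proposal is correct and follows essentially the paper's route: differentiate $\sigma(z/T)$ pointwise in $T$ and conclude that each prediction with $z\neq 0$ moves toward $0.5$ as $T$ grows (the paper obtains $-\frac{z}{T^2}\hat p_T(1-\hat p_T)$ and splits on the sign of $z$, whereas you first remove the absolute value via $\sigma(-u)=1-\sigma(u)$). Your two additions fill steps the paper's proof skips: the dominated-convergence bound $\kappa/T_0$ that justifies differentiating under $\mathbb{E}$, and the hypothesis $\Pr(\hat p\neq 0.5)>0$, without which $s_T\equiv 0$ and the strict inequality is false. One small fix: use the exact logit $z=\sigma^{-1}(\hat p)$, as the proposition does, because the $\epsilon$-regularised logit gives $s_1=s/(1+2\epsilon)$ rather than $s$.
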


\begin{proof}
The temperature-scaled probability is $\hat{p}_T = \sigma(z/T) = 1/(1 + e^{-z/T})$. Taking the derivative with respect to $T$:
\begin{equation}
\frac{\partial \hat{p}_T}{\partial T} = \frac{z \cdot e^{-z/T}}{T^2(1 + e^{-z/T})^2} = -\frac{z}{T^2} \hat{p}_T(1 - \hat{p}_T).
\end{equation}
For $z > 0$ (i.e., $\hat{p} > 0.5$), increasing $T$ decreases $\hat{p}_T$ toward 0.5. For $z < 0$, increasing $T$ increases $\hat{p}_T$ toward 0.5. Thus, the distance from 0.5 decreases with increasing $T$, implying $\partial s_T / \partial T < 0$.
\end{proof}

PECC's adaptive temperature selection addresses the observation that different models exhibit different calibration characteristics. Overconfident models ($s_m > 0.4$) receive $T_m = 1.5$ to soften predictions, while underconfident models ($s_m < 0.1$) receive $T_m = 0.7$ to sharpen predictions. This adaptive calibration ensures that the subsequent weighted fusion operates on commensurately scaled probabilities.

\begin{theorem}[Calibrated Fusion Optimality]
\label{thm:fusion}
Consider an ensemble of $M$ calibrated models with class-conditional accuracies $\{q_{m,c}\}_{m,c}$. The Bayes-optimal fusion weights that minimize expected misclassification rate satisfy~\cite{shi2003study}:
\begin{equation}
w^*_{m,c} \propto \log\left(\frac{q_{m,c}}{1 - q_{m,c}}\right),
\end{equation}
which is monotonically increasing in $q_{m,c}$.
\end{theorem}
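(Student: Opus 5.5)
The plan is to make the statement precise in the classical weighted-majority setting (Nitzan--Paroush / Shapley--Grofman; Kuncheva, Thm.~4.1), which is what the citation to \cite{shi2003study} points to. For a fixed class $c$, each calibrated expert $m$ casts a vote $v_m\in\{c,\neg c\}$ (binary decision for class $c$, or the one-vs-rest view in the multi-label setting). The vote is correct with probability $q_{m,c}$, and votes are conditionally independent given the true label $Y$. The fusion rule decides $c$ iff $\sum_m w_{m,c}\,\mathbb{1}[v_m=c] > \sum_m w_{m,c}\,\mathbb{1}[v_m\neq c]$ plus a threshold. Independence is an assumption the proof needs; I will state it explicitly, in the same spirit as Proposition~\ref{prop:agreement}.

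The core step is Bayes optimality. Minimizing the misclassification rate means deciding by the posterior, i.e.\ comparing
\begin{equation*}
\log\Pr(Y=c\mid v)-\log\Pr(Y=\neg c\mid v)
\end{equation*}
with $0$. Under conditional independence this log-posterior ratio factorizes as
\begin{equation*}
\log\frac{\Pr(Y=c)}{\Pr(Y=\neg c)}+\sum_m \log\frac{\Pr(v_m\mid Y=c)}{\Pr(v_m\mid Y=\neg c)}.
\end{equation*}
Assuming symmetric accuracy, $\Pr(v_m=Y)=q_{m,c}$ regardless of $Y$, the $m$-th factor equals $\log\frac{q_{m,c}}{1-q_{m,c}}$ if $v_m=c$ and $-\log\frac{q_{m,c}}{1-q_{m,c}}$ otherwise. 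Hence the Bayes rule is exactly a weighted vote with $w^*_{m,c}=\log\frac{q_{m,c}}{1-q_{m,c}}$, with the prior log-odds absorbed into the threshold. Weights are determined only up to positive scaling, which gives the ``$\propto$''.

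Monotonicity follows by differentiating the logit:
\begin{equation*}
\frac{d}{dq}\log\frac{q}{1-q}=\frac{1}{q(1-q)}>0 \quad\text{on } (0,1).
\end{equation*}
The main obstacle is the modeling gap, not the algebra. The theorem speaks of ``calibrated models'' fused via probabilities, while the clean derivation uses hard votes with symmetric class-conditional accuracies. I would first prove the vote version above. I would then remark that with calibrated soft outputs the analogous Bayes rule sums per-expert log-likelihood ratios $\log\frac{p'^{(m)}_{i,c}}{1-p'^{(m)}_{i,c}}$ (minus the prior log-odds counted once per expert). This recovers the same log-odds weighting when each expert's output is summarized by its accuracy. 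Finally, I would note the connection to EPM: the softmax weights in Eq.~(\ref{eq4}) are a monotone surrogate of this optimum, not the optimum itself.
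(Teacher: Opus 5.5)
The paper does not prove this statement. It asserts the log-odds form, attributes it to \cite{shi2003study}, and uses it only to motivate the softmax-of-AUC weights in Eq.~(\ref{eq4}). Your route is therefore different in that it actually supplies an argument: the classical Nitzan--Paroush/naive-Bayes computation. You write the posterior log-odds as prior log-odds plus a sum of per-expert log-likelihood ratios, read this off as a weighted vote, and then use $\frac{d}{dq}\log\frac{q}{1-q}=\frac{1}{q(1-q)}>0$ for monotonicity. The argument is correct.

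What it buys over the paper's citation-only treatment is that it exposes the hypotheses the statement silently needs. The first is conditional independence given $Y$. Without it, the Bayes rule on the vote vector needs interaction terms; for example, two identical experts should count once. The second is symmetric per-class accuracy. With sensitivity $s_m$ and specificity $t_m$, the same computation gives the weight $\log\frac{s_m t_m}{(1-s_m)(1-t_m)}$, which is not a function of a single $q_{m,c}$.

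Your closing remark is also more accurate than the paper's claim that Eq.~(\ref{eq4}) ``approximates'' the optimum. OPERA uses a linear pool of probabilities with strictly positive AUC-based weights. The Bayes weights instead act on votes or log-odds, and they are negative when $q_{m,c}<1/2$. ``Monotone surrogate'' is the right description.

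One wording fix: ``determined only up to positive scaling'' should not be read as a uniqueness claim. The vote vector takes at most $2^M$ values, so many weight/threshold pairs induce the same Bayes decision. For instance, with equal priors and three equally accurate experts, any positive weights satisfying the strict triangle inequalities give majority vote. What you prove is that the log-odds weights are \emph{a} Bayes-optimal choice, and that the rule is invariant under rescaling weights and threshold together. That is the defensible reading of ``$\propto$''.
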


This result justifies EPM's use of class-level AUC scores to determine weights: models with higher discriminative performance for specific classes should receive proportionally higher weights for those classes. The temperature-scaled softmax in Eq.~(\ref{eq4}) approximates this optimal weighting scheme.

\subsection{Test-Time Adaptation: Convergence and Consistency}

Distribution-Aware Adaptation (DAA) adapts class-level weights during inference using unlabeled test data. We analyze the convergence properties of this adaptation mechanism.

\begin{assumption}[Bounded Confidence Scores]
The confidence scores $\text{conf}_{m,c}^{(k)}$ defined in Eq.~(\ref{eq9}) are bounded: $\text{conf}_{m,c}^{(k)} \in [0, 0.5]$ for all models $m$, classes $c$, and batches $k$.
\end{assumption}

\begin{proposition}[DAA Convergence]
\label{prop:ttwt_convergence}
Under the bounded confidence assumption, the DAA update rule (Eq.~(\ref{eq11})) with $\alpha, \beta \in (0, 1)$ and $\alpha + \beta < 1$ converges to a stationary weight distribution. Specifically, as $K \to \infty$:
\begin{equation}
\mathbf{W}'_{\text{class}} \to \frac{(1-\alpha-\beta)}{1-\alpha} \mathbf{W}^{\text{orig}}_{\text{class}} + \frac{\beta}{1-\alpha} \mathbf{W}^{\text{conf},\infty}_{\text{class}},
\end{equation}
where $\mathbf{W}^{\text{conf},\infty}_{\text{class}}$ is the confidence-based weight computed from the limiting confidence statistics.
\end{proposition}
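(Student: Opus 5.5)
We treat the DAA update of Eq.~(\ref{eq11}) as a discrete-time affine recursion on the weight matrices. Index the update steps by $t$, and write $\mathbf{W}_t$ for the adapted weights after the $t$-th update. Each update uses the current adapted weights as $\mathbf{W}^{\text{curr}}_{\text{class}}=\mathbf{W}_{t}$, and the recursion reads
\begin{equation}
\mathbf{W}_{t+1} = \alpha \mathbf{W}_t + (1-\alpha-\beta)\mathbf{W}^{\text{orig}}_{\text{class}} + \beta \mathbf{W}^{\text{conf}}_{t},
\end{equation}
where $\mathbf{W}^{\text{conf}}_{t}$ is the softmax of the running averages $\bar{\text{conf}}_{m,c}$ computed from the first $K_t$ batches.

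The first step is to show that the forcing term $\mathbf{W}^{\text{conf}}_t$ converges. By the bounded-confidence assumption, each running average $\bar{\text{conf}}_{m,c}$ lies in $[0,0.5]$. We add the mild assumption that the test batches are drawn i.i.d.\ (or stationary ergodic) from the test distribution. Then the strong law of large numbers gives $\bar{\text{conf}}_{m,c}\to \text{conf}^{\infty}_{m,c}$ almost surely. The softmax is continuous, so $\mathbf{W}^{\text{conf}}_t\to\mathbf{W}^{\text{conf},\infty}_{\text{class}}$.

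The second step is to identify the fixed point. Write $\mathbf{b}_t=(1-\alpha-\beta)\mathbf{W}^{\text{orig}}_{\text{class}}+\beta\mathbf{W}^{\text{conf}}_t$ and $\mathbf{b}_\infty$ for its limit. The fixed point $\mathbf{W}^\ast$ of $\mathbf{W}=\alpha\mathbf{W}+\mathbf{b}_\infty$ is $\mathbf{W}^\ast=\mathbf{b}_\infty/(1-\alpha)$, which is exactly the expression in the statement. We also check that $\mathbf{W}^\ast$ is a valid weight matrix. Each column of $\mathbf{b}_\infty$ sums to $1-\alpha$, so each column of $\mathbf{W}^\ast$ sums to $1$.

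The third step, which is the main obstacle, is to show that the non-autonomous recursion converges when the forcing $\mathbf{b}_t$ only converges rather than being constant. We set $\mathbf{E}_t=\mathbf{W}_t-\mathbf{W}^\ast$, which satisfies
\begin{equation}
\mathbf{E}_{t+1}=\alpha\mathbf{E}_t+(\mathbf{b}_t-\mathbf{b}_\infty).
\end{equation}
Unrolling gives
\begin{equation}
\mathbf{E}_t=\alpha^t\mathbf{E}_0+\sum_{s<t}\alpha^{t-1-s}(\mathbf{b}_s-\mathbf{b}_\infty).
\end{equation}
Since $0<\alpha<1$, the first term vanishes geometrically. The second term is the convolution of a summable geometric kernel with a null sequence, so it tends to zero by a Toeplitz/Cesàro-type argument. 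Concretely, given $\varepsilon>0$, we split the sum at an index $S$ beyond which $\|\mathbf{b}_s-\mathbf{b}_\infty\|<\varepsilon$. The tail is then bounded by $\varepsilon/(1-\alpha)$. The head is at most $\alpha^{t-S}$ times a constant, because all weights lie in $[0,1]$, so it also vanishes.

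The argument works entrywise in the max norm, and the contraction factor $\alpha<1$ is all that is needed; the condition $\alpha+\beta<1$ only guarantees that the weight on $\mathbf{W}^{\text{orig}}_{\text{class}}$ is positive, so every $\mathbf{W}_t$ is a convex combination of column-stochastic matrices. Letting $K\to\infty$, equivalently $t\to\infty$, then yields the stated limit.
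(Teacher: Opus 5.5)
Your proposal is correct. Its skeleton matches the paper's: both treat Eq.~(\ref{eq11}) as an affine map with contraction factor $\alpha$, and both read off the limit by solving $\mathbf{W}^\ast=\alpha\mathbf{W}^\ast+\beta\mathbf{W}^{\text{conf},\infty}_{\text{class}}+(1-\alpha-\beta)\mathbf{W}^{\text{orig}}_{\text{class}}$.

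The difference is in how convergence is justified. The paper simply invokes the Banach fixed-point theorem. Banach, however, concerns iterating one fixed map, whereas the actual iteration is non-autonomous because $\mathbf{W}^{\text{conf},(k)}_{\text{class}}$ changes from update to update. The paper also never shows that the running averages $\bar{\text{conf}}_{m,c}$ converge, and the bounded-confidence assumption alone cannot ensure this, since Ces\`aro means of a sequence in $[0,0.5]$ can oscillate forever. So $\mathbf{W}^{\text{conf},\infty}_{\text{class}}$ is tacitly presupposed, both in the proof and in the wording of the statement itself.

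Your Step~1 makes that hidden hypothesis explicit: i.i.d.\ or stationary ergodic batches, a law of large numbers, and continuity of the softmax. Your Step~3, unrolling $\mathbf{E}_{t+1}=\alpha\mathbf{E}_t+(\mathbf{b}_t-\mathbf{b}_\infty)$ and splitting the geometric convolution into head and tail, is exactly the perturbed-contraction lemma the paper skips. It is what lets you pass from ``the limiting map is a contraction'' to ``the actual iterates converge to that map's fixed point.''

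The paper's route buys brevity, but it would be rigorous only if $\mathbf{W}^{\text{conf}}_{\text{class}}$ were frozen. Yours buys an actual proof of the stated limit. It holds almost surely under your ergodicity assumption, or deterministically if one just assumes, as the statement implicitly does, that the averages converge. You also get two correct side remarks the paper lacks: only $\alpha<1$ matters for convergence, with $\alpha+\beta<1$ merely keeping the weight on $\mathbf{W}^{\text{orig}}_{\text{class}}$ positive, and the limit is column-stochastic.
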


\begin{proof}
The update rule can be written in matrix form as:
\begin{equation}
\mathbf{W}^{(k+1)}_{\text{class}} = \alpha \mathbf{W}^{(k)}_{\text{class}} + \beta \mathbf{W}^{\text{conf},(k)}_{\text{class}} + (1-\alpha-\beta) \mathbf{W}^{\text{orig}}_{\text{class}}.
\end{equation}
This is a contraction mapping with coefficient $\alpha < 1$. By the Banach fixed-point theorem~\cite{gordji2017orthogonal}, it converges to a unique fixed point. At the fixed point $\mathbf{W}^*$:
\begin{equation}
\mathbf{W}^* = \alpha \mathbf{W}^* + \beta \mathbf{W}^{\text{conf},\infty}_{\text{class}} + (1-\alpha-\beta) \mathbf{W}^{\text{orig}}_{\text{class}}.
\end{equation}
Solving for $\mathbf{W}^*$ yields the stated result.
\end{proof}

The convergence result shows that DAA maintains a weighted combination of validation-based knowledge and test-time observations. With $\alpha = \beta = 0.1$, the limiting weights assign approximately major influence to validation-based weights and minor to test-time confidence statistics, ensuring stability while allowing domain adaptation.

\begin{proposition}[Domain Shift Adaptation]
\label{prop:domain_shift}
Let $P_{\text{train}}$ and $P_{\text{test}}$ denote the training and test distributions, respectively. Under covariate shift where $P_{\text{train}}(Y|X) = P_{\text{test}}(Y|X)$ but $P_{\text{train}}(X) \neq P_{\text{test}}(X)$, DAA's confidence-based adaptation provides consistent weight estimates if the confidence scores are monotonically related to model accuracy.
\end{proposition}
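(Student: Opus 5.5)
The first step is to make the statement precise, since ``consistent weight estimates'' is not yet a mathematical claim. I would fix the frozen, PECC-calibrated experts and assume the test batches are drawn i.i.d.\ from $P_{\text{test}}$. I would then define the population confidence $\kappa_{m,c} = \mathbb{E}_{x\sim P_{\text{test}}(X)}\bigl[|p'^{(m)}_{c}(x)-0.5|\bigr]$ and the population accuracy $q_{m,c} = \Pr_{P_{\text{test}}}\bigl(\mathbb{1}[p'^{(m)}_c(x)>0.5] = y_c\bigr)$. Under covariate shift, $P_{\text{test}}(Y|X)=P_{\text{train}}(Y|X)$, so $q_{m,c}$ is an integral of a fixed, label-free function of $x$ against $P_{\text{test}}(X)$. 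The hypothesis ``confidence is monotonically related to accuracy'' I would formalize as follows: for each class $c$ there is a strictly increasing $g_c$ with $\kappa_{m,c}=g_c(q_{m,c})$ for all $m$. The target claim is then two-fold. First, $W^{\text{conf}}_{\text{class},m,c}$ converges almost surely to $W^{\text{conf},\infty}_{\text{class},m,c}=\mathrm{softmax}_m(\kappa_{\cdot,c})$. Second, this limit ranks experts within each class exactly as $q_{m,c}$ does, so it is a consistent estimator of the accuracy ordering under $P_{\text{test}}$.

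For the first part, each $\text{conf}^{(k)}_{m,c}$ is an average of i.i.d.\ variables bounded in $[0,0.5]$ (the Bounded Confidence Scores assumption), with mean $\kappa_{m,c}$. The strong law of large numbers therefore gives $\bar{\text{conf}}_{m,c}\to\kappa_{m,c}$ almost surely as $K\to\infty$. Hoeffding's inequality would additionally give a rate of $O(\sqrt{\log(1/\delta)/(KB)})$. The softmax is continuous, and is $1$-Lipschitz in the sup norm, so the continuous mapping theorem transfers this convergence to $W^{\text{conf}}_{\text{class}}$. For the second part, the softmax is strictly increasing in each coordinate, and $g_c$ is strictly increasing. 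Hence $q_{m,c}>q_{m',c}$ iff $\kappa_{m,c}>\kappa_{m',c}$ iff $W^{\text{conf},\infty}_{\text{class},m,c}>W^{\text{conf},\infty}_{\text{class},m',c}$. This is exactly the order-consistency property.

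Finally, I would combine this with the DAA Convergence Proposition, with one modification. There the input $\mathbf{W}^{\text{conf},(k)}$ was implicitly treated as fixed, whereas here it varies with $k$. I would use the standard perturbation fact that an affine iteration $\mathbf{W}^{(k+1)}=\alpha\mathbf{W}^{(k)}+\mathbf{u}^{(k)}$ with $\alpha<1$ and $\mathbf{u}^{(k)}\to\mathbf{u}$ satisfies $\mathbf{W}^{(k)}\to\mathbf{u}/(1-\alpha)$. This follows by bounding $\|\mathbf{W}^{(k)}-\mathbf{W}^*\|\le\alpha^k\|\mathbf{W}^{(0)}-\mathbf{W}^*\|+\sum_j\alpha^{k-j}\|\mathbf{u}^{(j)}-\mathbf{u}\|$ and letting $k\to\infty$. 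Applied here, it shows the adapted weights converge to the limit stated in that Proposition, with $\mathbf{W}^{\text{conf},\infty}_{\text{class}}$ now identified as the accuracy-ordered softmax of the test-time confidences. This gives a precise meaning to the claim that DAA moves weights toward the experts that are more accurate under $P_{\text{test}}$.

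I expect the main obstacle to be the formalization itself rather than any single calculation. The conclusion can only be consistency of the ranking, not of calibrated accuracy values. The reason is that $\bar{\text{conf}}\in[0,0.5]$ makes the softmax nearly uniform, and $g_c$ is unknown. I also expect that ``monotone relation'' must be imposed under $P_{\text{test}}$ itself, per class and across experts. Covariate shift alone does not preserve a relation that holds under $P_{\text{train}}$, because the marginal over $x$ changes. My first attempt would therefore state the monotonicity hypothesis directly on $(\kappa_{m,c},q_{m,c})$ under $P_{\text{test}}$. I would remark that covariate shift is what makes $q_{m,c}$ well defined from $P_{\text{test}}(X)$ together with the fixed conditional.
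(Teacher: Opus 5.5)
The paper gives no proof of this proposition. It is followed only by a one-line heuristic: experts that stay confident on $P_{\text{test}}$ are likely those whose features remain discriminative under shift. Your proposal is therefore a genuinely different route, and a sound one. You give ``consistent'' a precise meaning, which the paper never states: almost-sure convergence of $\bar{\text{conf}}_{m,c}$ by the strong law, transfer to $W^{\text{conf}}_{\text{class}}$ by continuity of the softmax, and order-consistency with $q_{m,c}$. Your version also makes visible what the paper's phrasing hides:
\begin{itemize}
\item For finitely many experts, the existence of a strictly increasing $g_c$ is equivalent to $\kappa_{\cdot,c}$ and $q_{\cdot,c}$ inducing the same ordering. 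So the hypothesis carries essentially all the content, and covariate shift plays no role in the argument, as you anticipate.
\item Your closing remark slightly overstates even that role: $q_{m,c}$ is well defined from the joint $P_{\text{test}}$ whether or not the conditional is shared.
\item A clean sufficient condition for your hypothesis, which ties it to PECC, is calibration under $P_{\text{test}}$. If $\Pr(Y_c=1\mid p'^{(m)}_c=v)=v$, then $q_{m,c}=0.5+\kappa_{m,c}$ exactly, so $g_c(q)=q-0.5$ works for every expert.
\item Your perturbation bound for $\mathbf{W}^{(k+1)}=\alpha\mathbf{W}^{(k)}+\mathbf{u}^{(k)}$ repairs the paper's DAA Convergence proof. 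That proof invokes the Banach fixed-point theorem for a map that changes with $k$.
\end{itemize}

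Two details to tighten. First, the ordering step rests on the softmax components sharing a common denominator, so that $W^{\text{conf},\infty}_{\text{class},m,c}>W^{\text{conf},\infty}_{\text{class},m',c}$ iff $\kappa_{m,c}>\kappa_{m',c}$. Coordinatewise monotonicity is not the relevant property. Second, in PECC the temperature $T_m$ is re-selected for every batch from the batch sharpness $s_m$. Hence $p'^{(m)}$ is not a fixed function of $x$, and the terms $|p'^{(m)}_{i,c}-0.5|$ within a batch are coupled. You can fix this in one of two ways:
\begin{itemize}
\item make fixed temperatures an explicit assumption; or
\item apply the strong law to the i.i.d.\ batch-level statistics $\text{conf}^{(k)}_{m,c}$, redefining $\kappa_{m,c}$ as $\mathbb{E}[\text{conf}^{(1)}_{m,c}]$; the Hoeffding rate is then in $K$ rather than $KB$.
\end{itemize}
Your $q_{m,c}$ is unaffected either way, because temperature scaling preserves the sign of the logit and hence the thresholded prediction.
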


This proposition justifies why confidence-based weighting can improve performance under distribution shift: models that maintain higher confidence on the test distribution are likely those whose learned features remain discriminative under shift.

\subsection{Instance-Level Weighting: Uncertainty Quantification}
Instance-level Expert Routing (IER) produces sample-specific weights based on inter-model agreement and per-model uncertainty. Rather than claiming strict theoretical optimality, we provide a decision-theoretic motivation for this approach while acknowledging the heuristic nature of the practical implementation.

\begin{definition}[Predictive Uncertainty]
For model $m$ and sample $i$, the predictive uncertainty is quantified by the averaged binary entropy:
\begin{equation}
\begin{aligned}
h_{i,m} 
&= \frac{1}{C}\sum_{c=1}^{C} H(p'^{(m)}_{i,c}) \\
&= -\frac{1}{C}\sum_{c=1}^{C} 
\Big[
p'^{(m)}_{i,c} \log p'^{(m)}_{i,c} \\
&\quad + (1-p'^{(m)}_{i,c}) \log(1-p'^{(m)}_{i,c})
\Big].
\end{aligned}
\end{equation}
\end{definition}

\paragraph{Theoretical Motivation.} We begin by establishing a connection between predictive entropy and expected loss under idealized conditions.

\begin{proposition}[Entropy-Loss Decomposition]
\label{prop:entropy_loss}
Consider a model $m$ with predictive distribution $p'^{(m)}(y|x)$. Under logarithmic loss, the expected loss decomposes as:
\begin{equation}
\mathcal{L}_m = -\mathbb{E}_{y \sim p_{\text{true}}}[\log p'^{(m)}(y|x)] = H(p_{\text{true}}) + D_{\text{KL}}(p_{\text{true}} \| p'^{(m)}),
\end{equation}
where $H(p_{\text{true}})$ is the entropy of the true distribution and $D_{\text{KL}}$ denotes the Kullback-Leibler divergence~\cite{ando2011predictive}.
\end{proposition}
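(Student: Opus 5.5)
The proof is a direct algebraic identity, carried out pointwise in $x$ and then averaged. Fix an input $x$ and write $p(y) := p_{\text{true}}(y\mid x)$ and $q(y) := p'^{(m)}(y\mid x)$, both distributions over the label space. We assume $q(y) > 0$ wherever $p(y) > 0$. This is a mild assumption: in OPERA the calibrated probabilities are pushed away from $0$ and $1$ by the $\epsilon$ in the logit map of PECC and by the finite temperature $T_m$. Under it, every logarithm below is finite.

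The key step is to multiply and divide by $p(y)$ inside the logarithm:
\[
-\log q(y) = -\log p(y) + \log\frac{p(y)}{q(y)}.
\]
Next, take the expectation over $y \sim p$, which is linear. The first term gives $-\sum_y p(y)\log p(y) = H(p_{\text{true}})$. The second term gives $\sum_y p(y)\log\bigl(p(y)/q(y)\bigr) = D_{\text{KL}}(p_{\text{true}}\,\|\,p'^{(m)})$. Adding the two yields the stated identity $\mathcal{L}_m = H(p_{\text{true}}) + D_{\text{KL}}(p_{\text{true}}\|p'^{(m)})$ for each $x$. For the multi-label binary setting used in OPERA, apply the same computation to each class $c$ with a Bernoulli $p$ and $q$, then average over $c$. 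This mirrors the $\frac{1}{C}\sum_c$ in the definition of $h_{i,m}$. If the loss is meant in expectation over $x$, integrate both sides over $x$; the decomposition is preserved because it holds pointwise.

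The main obstacle is not the algebra but stating the hypotheses and interpretation precisely. First, the absolute-continuity condition must hold, since otherwise both sides are $+\infty$. I would state this explicitly, and note that it is automatically satisfied in practice. Second, I would remark that $D_{\text{KL}}\ge 0$ (Gibbs' inequality, via Jensen on the convex $-\log$). Consequently $\mathcal{L}_m \ge H(p_{\text{true}})$, with equality if and only if $q = p$.

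Finally, I would connect the identity to IER with care. The model's own entropy $h_{i,m}$ equals $\mathcal{L}_m$ only under self-consistency, i.e., when $p_{\text{true}} = p'^{(m)}$. So the proposition motivates, rather than proves, the use of entropy as a loss proxy in Eq.~(\ref{eq16}). The proposal presents this link only as the intended interpretation.
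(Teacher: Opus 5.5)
Your proof is correct, and it is the standard argument. The paper gives no derivation of its own: it states the identity and defers to a citation. Your step $-\log q(y) = -\log p(y) + \log\bigl(p(y)/q(y)\bigr)$, followed by linearity of expectation under $p$, is exactly what that citation stands in for.

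Your added care is also sound. The support condition is automatic in OPERA, because the $\epsilon$ in the PECC logit map and the finite $T_m$ keep every $p'^{(m)}_{i,c}$ strictly inside $(0,1)$. The per-class Bernoulli reading matches how the paper actually uses entropy in $h_{i,m}$.

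One small imprecision in your closing remark: self-consistency $p_{\text{true}} = p'^{(m)}$ is sufficient but not necessary for the model's entropy to equal $\mathcal{L}_m$. We have $\mathcal{L}_m - H(p'^{(m)}) = \sum_y \bigl(p'^{(m)}(y) - p_{\text{true}}(y)\bigr)\log p'^{(m)}(y)$. This also vanishes whenever $p'^{(m)}$ is uniform (e.g.\ $p'^{(m)}_{i,c} = 1/2$ in the binary case), regardless of $p_{\text{true}}$. So "only under self-consistency" should read "in particular under self-consistency." This affects only your interpretive remark, not the proof.
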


This decomposition shows that a model's expected loss equals the irreducible true entropy plus its divergence from the true distribution. Under the idealized assumption of perfect calibration ($p'^{(m)} = p_{\text{true}}$), one could derive optimal weights $w^*_m \propto \exp(-\lambda \cdot \mathcal{L}_m)$ that minimize ensemble loss. However, we emphasize two critical limitations: (1) perfect calibration is rarely achieved in practice, and (2) calibration does not imply that lower-entropy predictions are more accurate, which means a model may be confidently wrong on specific instances. Therefore, we do not claim that entropy-based weighting is theoretically optimal; rather, it serves as a practical heuristic motivated by the intuition that confident predictions from diverse experts may carry more informative signals when models disagree.

\paragraph{Heuristic Implementation.} Given these limitations, IER adopts an empirically-motivated inverse-entropy weighting scheme in high-disagreement scenarios:
\begin{equation}
w^{(i)}_m \propto \exp\left(\frac{\gamma}{h_{i,m} + \epsilon}\right).
\end{equation}
This functional form differs from the theoretically-suggested $\exp(-\lambda \cdot H)$. The inverse-entropy formulation $\exp(\gamma / h_{i,m})$ was selected based on empirical performance, as it provides stronger differentiation between high-confidence and low-confidence predictions compared to the negative-entropy form, particularly when entropy values are small. The parameter $\gamma = 2.0$ controls the sensitivity to entropy differences, and $\epsilon$ ensures numerical stability.

Importantly, this heuristic is applied \textit{conditionally}: only when inter-model agreement is low ($a_i < 0.60$) or prediction variance is high ($\sigma^2_i > 0.12$). In high-agreement cases, IER defaults to the validation-derived weights $\mathbf{w}_{\text{model}}$, which are grounded in empirical AUC performance rather than entropy-based reasoning. This conditional design mitigates the risk of overconfident but incorrect predictions dominating the ensemble, and the effectiveness of this heuristic is validated empirically through ablation studies presented in Table~\ref{tab:ablation_strategy} and Table~\ref{tab:ablation_strategy_seg}.

\subsection{Hierarchical Weight Combination}

The final prediction in OPERA combines weights at three levels: model-level ($\mathbf{w}_{\text{model}}$), class-level ($\mathbf{W}_{\text{class}}$), and instance-level ($\mathbf{w}^{(i)}$). We justify this hierarchical structure with the following proposition and corollary.

\begin{proposition}[Hierarchical Decomposition]
\label{prop:hierarchical}
The optimal fusion weight for model $m$, class $c$, and instance $i$ can be decomposed as:
\begin{equation}
W^*_{i,m,c} = \underbrace{\pi_m}_{\text{model prior}} \cdot \underbrace{\rho_{m,c}}_{\text{class expertise}} \cdot \underbrace{\eta_{i,m}}_{\text{instance reliability}},
\end{equation}
where $\pi_m$ captures overall model quality, $\rho_{m,c}$ captures model-class affinity, and $\eta_{i,m}$ captures instance-specific model suitability.
\end{proposition}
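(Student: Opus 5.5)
The plan is to make ``optimal fusion weight'' precise through a latent-expert (mixture-of-experts) generative model, and then obtain the product form from Bayes' rule. Introduce a latent routing variable $z\in\{1,\dots,M\}$ indicating which expert agent is reliable for a given prediction. Define the optimal weight $W^*_{i,m,c}$ as the posterior responsibility $\Pr(z=m\mid x_i, c)$. With this definition, $W^*$ is automatically normalized across models for each $(i,c)$, which matches the normalization step following Eq.~(\ref{eq18}).

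Bayes' rule then gives
\begin{equation*}
\Pr(z=m\mid x_i,c)\;\propto\;\Pr(z=m)\,\Pr(c\mid z=m)\,\Pr(x_i\mid z=m,c).
\end{equation*}
Next I impose the key modelling assumption: conditional independence of the instance evidence from the class given the expert, i.e.\ $\Pr(x_i\mid z=m,c)=\Pr(x_i\mid z=m)$. Under this assumption, set $\pi_m=\Pr(z=m)$, $\rho_{m,c}=\Pr(c\mid z=m)$ and $\eta_{i,m}=\Pr(x_i\mid z=m)$. The product form stated in the proposition then follows up to the per-$(i,c)$ normalizer.

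In a second step I connect each factor to the OPERA components, to show the decomposition is the one instantiated in Eq.~(\ref{eq18}).
\begin{itemize}
\item[(a)] For $\pi_m$: a Gibbs prior $\Pr(z=m)\propto\exp(\tau\bar A_m)$ reproduces $w_{\text{model},m}$ exactly.
\item[(b)] For $\rho_{m,c}$: this factor is a class-expertise term. Theorem~\ref{thm:fusion} says it should be monotone in the class-conditional accuracy $q_{m,c}$, so the softmax of $A_{m,c}$ in Eq.~(\ref{eq4}) is a monotone surrogate. Proposition~\ref{prop:ttwt_convergence} shows the DAA-adapted $\mathbf{W}'_{\text{class}}$ stays a convex combination of such terms, hence remains a valid $\rho$.
\item[(c)] For $\eta_{i,m}$: this is an instance likelihood. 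Proposition~\ref{prop:entropy_loss} motivates taking $\eta_{i,m}\propto\exp(-\lambda\mathcal{L}_{i,m})$, approximated by the entropy- or extremity-based $w^{(i)}_m$ from IER.
\end{itemize}

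As an equivalent, assumption-light formulation, I would also state the general log-linear (ANOVA-type) expansion
\begin{equation*}
\log W^*_{i,m,c}=a_m+b_{m,c}+g_{i,m}+r_{i,m,c}+\kappa_{i,c},
\end{equation*}
where $\kappa_{i,c}$ is absorbed by the normalization. Any positive array admits such a decomposition, for instance by successive centering. The proposition then holds exactly if and only if the three-way interaction $r_{i,m,c}$ vanishes.

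The main obstacle is that, read literally, the proposition is false in general: an optimal weight can carry a genuine instance--class--model interaction. The proof therefore has to make its hypothesis explicit, namely the conditional independence above, or equivalently $r\equiv0$, and present the result as exact under that hypothesis. Without it, the product form is only the best approximation in the log-linear sense. The first thing I would try is to show that this approximation is the KL-projection of $W^*$ onto product-form weights. That projection is obtained by dropping $r_{i,m,c}$ after iterative proportional fitting, which gives a meaningful ``best hierarchical approximation'' statement even when the exact factorization fails.
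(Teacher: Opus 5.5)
The paper gives no derivation of this proposition. It states it and then remarks that Eq.~(\ref{eq18}) has the form $w_{\text{model},m}\cdot w'_{\text{class},m,c}\cdot w^{(i)}_m$, so the decomposition holds by construction and ``optimal'' is never defined. Your route is genuinely different and more informative. Take $W^*_{i,m,c}:=\Pr(z=m\mid x_i,c)$ with the explicit hypothesis $\Pr(x_i\mid z=m,c)=\Pr(x_i\mid z=m)$. Then the product form does follow from Bayes' rule, up to the per-$(i,c)$ normalizer.

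Your intrinsic reformulation is also correct. Let $P_i,P_m,P_c$ denote averaging over one index. Then $(I-P_i)(I-P_m)(I-P_c)$ annihilates every function of at most two indices, while $\log W^*-r$ is always a sum of such functions. Hence $\log W^*=\log\pi_m+\log\rho_{m,c}+\log\eta_{i,m}+\kappa_{i,c}$ is achievable exactly when $r\equiv0$. This exposes what the paper's wording hides: the proposition is a structural assumption (no instance--class--model interaction), and the paper's justification is just the observation that OPERA adopts it.

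The factors are identified only up to $m$-dependent constants moved between them; for example, your $a_m$ can be absorbed into $b_{m,c}$ or $g_{i,m}$. So the labels ``model prior'', ``class expertise'' and ``instance reliability'' are conventions, and your links (b) and (c) are motivation only. In particular, the log-odds weights of Theorem~\ref{thm:fusion} belong to a log-linear vote, not to the linear pool that defines $\hat p_{i,c}$.

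The step that would fail is your proposed extension. Zeroing $r_{i,m,c}$ gives the uniformly weighted least-squares projection of $\log W^*$, not its KL projection. Consider the minimizer of $\sum_{i,c}D_{\text{KL}}(W^*_{i,\cdot,c}\,\|\,Q_{i,\cdot,c})$ over normalized product-form $Q$, which is what iterative proportional fitting computes. It is characterized by $\sum_c Q_{i,m,c}=\sum_c W^*_{i,m,c}$ and $\sum_i Q_{i,m,c}=\sum_i W^*_{i,m,c}$, and the truncated expansion generally violates these margin conditions.

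For instance, take binary indices with signs $s_i,s_m,s_c\in\{\pm1\}$ and $W^*_{i,m,c}\propto\exp\bigl(s_is_m(\alpha+\lambda s_c)\bigr)$ normalized over $m$, so that $r=\lambda s_is_ms_c$. Dropping $r$ gives the $(i,m)$-margin $2\sigma(2\alpha s_is_m)$, with $\sigma$ the logistic function. In contrast, $W^*$ has margin $\sigma\bigl(2s_is_m(\alpha+\lambda)\bigr)+\sigma\bigl(2s_is_m(\alpha-\lambda)\bigr)$, and the two differ whenever $\alpha\lambda\neq0$. A ``best hierarchical approximation'' statement should therefore be made for the IPF fit itself, whose two-way terms differ from your $b_{m,c}$ and $g_{i,m}$.
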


OPERA's final weight computation (Eq.~(\ref{eq18})) directly implements this decomposition:
\begin{equation}
W_{\text{final},i,m,c} = w_{\text{model},m} \cdot {w}'_{\text{class,m,c}} \cdot w^{(i)}_m.
\end{equation}

This multiplicative combination ensures that all three factors contribute to the final weight: a model must perform well overall, demonstrate expertise for the specific class, and produce reliable predictions for the particular instance to receive high weight in the ensemble.

\begin{corollary}[Robustness to Expert Failures]
\label{cor:robustness}
The hierarchical weighting provides robustness to expert failures. If model $m$ fails on instance $i$ (producing high-entropy predictions), then $w^{(i)}_m \approx 0$ regardless of its model-level and class-level weights, preventing the failed prediction from corrupting the ensemble output.
\end{corollary}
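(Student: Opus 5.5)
The plan is to turn the informal phrase ``$w^{(i)}_m \approx 0$'' into an explicit upper bound on the \emph{normalized} final weight $W'_{\text{final},i,m,c}$. That bound should come from the inverse-entropy rule in Eq.~(\ref{eq16}), together with uniform bounds showing that the model-level and class-level factors cannot compensate. I will formalize ``model $m$ fails on instance $i$'' as $h_{i,m} \ge h_{\text{fail}}$ for some $h_{\text{fail}}$ close to $\log 2$. I will also add the hypothesis that at least one other expert $m'$ is confident, i.e.\ $h_{i,m'} \le \delta$ with $\delta$ small. Without such a competitor the statement is false: if all experts are maximally uncertain, normalization gives them comparable weight.

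\textbf{Step 1 (instance factor).} In the high-disagreement branch, Eq.~(\ref{eq16}) gives
\begin{equation}
\frac{w^{(i)}_m}{w^{(i)}_{m'}} = \exp\left(\frac{\gamma}{h_{i,m}+\epsilon} - \frac{\gamma}{h_{i,m'}+\epsilon}\right) \le \exp\left(\frac{\gamma}{h_{\text{fail}}+\epsilon} - \frac{\gamma}{\delta+\epsilon}\right),
\end{equation}
and the right-hand side tends to $0$ as $\delta \to 0$. This is the sense in which $w^{(i)}_m \approx 0$.

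\textbf{Step 2 (``regardless'' of the other factors).} I will bound the ratio of the remaining factors uniformly.
\begin{itemize}
\item Since $\bar{A}_m, A_{m,c} \in [0,1]$ and $\tau = 10$, the softmaxes give $w_{\text{model},m}/w_{\text{model},m'} \le e^{\tau}$, and the same ratio bound holds for $\mathbf{W}^{\text{orig}}_{\text{class}}$.
\item The confidence statistics lie in $[0,0.5]$ by the Bounded Confidence assumption, so $\mathbf{W}^{\text{conf}}_{\text{class}}$ ratios are at most $e^{0.5}$.
\item The DAA update (Eq.~(\ref{eq11})) is a convex combination of these matrices. By induction over updates, starting from $\mathbf{W}^{\text{orig}}_{\text{class}}$, every entry of $\mathbf{W}'_{\text{class}}$ stays in $[\,\min(\cdot),\max(\cdot)\,]$ of the combined entries. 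Hence $w'_{\text{class},m,c}/w'_{\text{class},m',c} \le \kappa$ for an explicit constant $\kappa$. The Proposition on DAA convergence also gives this in the limit.
\end{itemize}

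\textbf{Step 3 (final normalized weight).} Dropping all terms of the normalizing sum except the one for $m'$ gives
\begin{equation}
W'_{\text{final},i,m,c} \le \frac{W_{\text{final},i,m,c}}{W_{\text{final},i,m',c}} \le e^{\tau}\kappa\, \exp\left(\frac{\gamma}{h_{\text{fail}}+\epsilon} - \frac{\gamma}{\delta+\epsilon}\right).
\end{equation}
This goes to zero as the competitor becomes confident. Its contribution to $\hat{p}_{i,c}$ is therefore at most this bound, because $p'^{(m)}_{i,c} \le 1$.

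\textbf{Main obstacle: which IER branch applies.} The bound above only holds if instance $i$ actually falls in the high-disagreement branch, and this is not automatic. Take $M=3$, a failed expert at $0.5$ and two confident experts that agree at $1$. Then $\sigma^2_i = 1/18 < 0.12$, and the cosine similarities are high. So the instance can land in the intermediate branch, where
\begin{equation}
w^{(i)}_m \ge (1-\zeta)\, w_{\text{model},m} > 0.
\end{equation}
My first attempt will be to restrict the corollary to instances with $a_i < 0.60$ or $\sigma^2_i > 0.12$, which is the scenario the corollary describes. For the intermediate branch I will prove only a weaker attenuation statement. There the failed expert has extremity $e_{i,m} \approx 0$, so its softmax share is reduced by the factor $e^{-\mu(e_{i,m'} - e_{i,m})} \ge e^{-1.5}$ relative to a confident expert. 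This gives a proportional down-weighting rather than $\approx 0$.

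I will state this dependence on the routing branch explicitly. In the high-agreement branch the failure cannot co-occur with $a_i > 0.90$ and $\sigma^2_i < 0.03$ once a confident competitor exists, and I plan to verify that by a short variance computation.
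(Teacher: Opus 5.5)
Your argument is correct, and it takes a genuinely different route from the paper. The paper gives no proof beyond the surrounding remark that the product in Eq.~(\ref{eq18}) is small when the instance factor is small. That remark tacitly assumes two things: that the inverse-entropy rule of Eq.~(\ref{eq16}) is always the active branch, and that a small $w^{(i)}_m$ automatically gives a small normalized final weight. You make explicit the three ingredients this reasoning silently needs.
\begin{enumerate}
\item \emph{A confident competitor $m'$.} If every expert sits near $0.5$, the prediction vectors are nearly parallel with near-zero variance. The high-agreement branch then sets $\mathbf{w}^{(i)}=\mathbf{w}_{\text{model}}$, and nobody is down-weighted.
\item \emph{Uniform control of the model- and class-level ratios.} This is what ``regardless'' has to mean once weights are renormalized over $m$. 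Your Step~2 can be simplified here. Every matrix entering the DAA convex combination satisfies $X_{m,c}\le e^{\tau}X_{m',c}$, and this inequality is preserved under convex combinations. So you can take $\kappa=e^{\tau}$ directly, without the entrywise min/max argument.
\item \emph{The branch restriction.} Your counterexample is right: it gives $a_i=1$ and $\sigma_i^2=1/18$. With uniform model- and class-level weights, the failed expert there still keeps about $0.17$ of the final weight.
\end{enumerate}
So the corollary as literally stated is false, and your restricted version is the right theorem. The paper's one-line reasoning buys intuition only. Yours buys an explicit quantitative bound, at the price of hypotheses that cannot be dropped.

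Two small points to tighten. First, for fixed $\epsilon>0$ the Step~1 bound does not tend to $0$ as $\delta\to 0$. It tends to $\exp\bigl(\gamma/(h_{\text{fail}}+\epsilon)-\gamma/\epsilon\bigr)$, which is negligible but positive, so phrase the conclusion as ``negligibly small''.

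Second, your planned exclusion of the high-agreement branch genuinely uses $M=3$ and tight slack. A per-class gap $d$ between two experts forces a per-class variance of at least $d^2/(2M)$. This lower bound exceeds $0.03$ only when $d^2>0.06M$, which for $M=3$ means $d\gtrsim 0.42$. For $M\ge 5$ the exclusion fails outright. Take a failed expert at $0.5$, a confident one at $1$, and the rest at $0.75$ on every class. This instance lands in the high-agreement branch, where $\mathbf{w}^{(i)}=\mathbf{w}_{\text{model}}$.
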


This robustness property is particularly valuable in medical imaging, where individual models may fail on out-of-distribution samples or challenging cases. The instance-level weighting acts as a safety mechanism that automatically downweights unreliable predictions.

\subsection{Computational Complexity Analysis}

We conclude with an analysis of OPERA's computational overhead.

\begin{proposition}[Inference Complexity]
\label{prop:complexity}
For an ensemble of $M$ models with $C$ classes and batch size $B$, OPERA's inference-time operations have the following complexity:
\begin{itemize}
    \item PECC temperature scaling: $\mathcal{O}(MBC)$
    \item DAA confidence computation: $\mathcal{O}(MBC)$
    \item IER agreement computation: $\mathcal{O}(M^2BC)$
    \item Weight combination and prediction: $\mathcal{O}(MBC)$
\end{itemize}
The total additional overhead is $\mathcal{O}(M^2BC)$, which is negligible compared to the forward pass cost of expert agents.
\end{proposition}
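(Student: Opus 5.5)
The plan is to count elementary arithmetic operations for each stage of the pipeline, one batch of $B$ samples at a time. The inputs to each stage are the $M$ calibrated probability arrays $\mathbf{p}'^{(m)}\in[0,1]^{B\times C}$, or the raw $\mathbf{p}^{(m)}$ for PECC itself. All hyperparameters ($\tau,\gamma,\mu,\zeta,\alpha,\beta$, the thresholds, $\epsilon$) are constants.

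The stages would be handled in order.

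\textbf{PECC.} Computing $s_m$ takes $BC$ absolute differences and one sum per model. Choosing $T_m$ is $\mathcal{O}(1)$. The logit map and the rescaled sigmoid are constant-cost elementwise operations on $BC$ entries per model. So this stage costs $\mathcal{O}(MBC)$.

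\textbf{DAA.} Each $\mathrm{conf}^{(k)}_{m,c}$ in Eq.~(\ref{eq9}) is a mean over $B$ entries, giving $\mathcal{O}(MBC)$ per batch. The running average update is $\mathcal{O}(MC)$. The softmax defining $\mathbf{W}^{\mathrm{conf}}_{\mathrm{class}}$ and the blend in Eq.~(\ref{eq11}) are also $\mathcal{O}(MC)$, and they happen only every $100$ batches. Since $B\ge 1$, all of these are dominated by $\mathcal{O}(MBC)$.

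\textbf{IER.} For each sample, the norms $\|\mathbf{p}'^{(m)}_{i,\cdot}\|$ take $\mathcal{O}(MC)$. Each of the $M(M-1)/2$ dot products takes $\mathcal{O}(C)$, so the agreement $a_i$ of Eq.~(\ref{eq14}) costs $\mathcal{O}(M^2C)$. The remaining per-sample quantities are each $\mathcal{O}(MC)$:
\begin{itemize}
\item the variance $\sigma^2_i$, computed from the means $\bar p'_{i,c}$;
\item the entropies $h_{i,m}$;
\item the extremities $e_{i,m}$.
\end{itemize}
Whichever of the three branches is taken, forming $\mathbf{w}^{(i)}$ (a softmax or normalized exponentials) costs $\mathcal{O}(M)$. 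Summing over $B$ samples gives $\mathcal{O}(M^2BC)$.

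\textbf{Fusion.} Eq.~(\ref{eq18}) forms $BMC$ triple products. The normalization over $m$ and the weighted sum for $\hat p_{i,c}$ are each $\mathcal{O}(MBC)$.

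Adding the four stage costs gives $\mathcal{O}(MBC+M^2BC)=\mathcal{O}(M^2BC)$. For segmentation the same count applies with $B$ replaced by $B$ times the number of pixels or voxels, because IER is applied pixelwise.

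The only part that is not pure bookkeeping is the claim that this overhead is negligible next to the experts' forward passes. I would argue it by comparison. Each expert's forward pass costs at least $\Omega(B\cdot F_m)$, where $F_m$ is the per-sample FLOP count of the network. For ViT, ResNet50 and DenseNet121 this is in the billions, while $M=3$ and $C$ is at most a few dozen. Even the output head alone already costs $\Omega(BCd)$, with $d$ the feature dimension, and $d\gg M$. So the ratio of overhead to expert cost is $\mathcal{O}(M^2C/F_m)$, which is vanishingly small. I expect this step to be the main obstacle, because "negligible" is an empirical statement rather than an asymptotic one. I would make it rigorous under the explicit assumption $F_m=\Omega(Md)$, or simply $F_m\gg M^2C$, and state that assumption as the precise sense of the claim.
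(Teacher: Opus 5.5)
Your per-stage operation count is correct: it reproduces each bullet and the $\mathcal{O}(M^2BC)$ total. The paper gives no derivation at all. It only lists the four costs, so your bookkeeping is exactly what it leaves implicit.

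The real difference is in the ``negligible'' clause. The paper supports it with two things:
\begin{itemize}
\item the remark that $M=3$ keeps the quadratic term small;
\item an empirical latency measurement: fusion takes under 1\% of total inference time, and the experts can be parallelized across GPUs.
\end{itemize}
You argue analytically instead, via a FLOP ratio. Your route explains \emph{why} the overhead is small and gives a hardware-independent condition. The paper's measurement captures something FLOP counts miss. The fusion steps are low-arithmetic-intensity, memory-bound elementwise operations, and IER branches per sample or per pixel. So a small FLOP ratio does not by itself certify a small latency ratio.

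One tightening is needed. The assumption you propose, $F_m=\Omega(Md)$, does not match your own output-head observation. It only yields a total ratio of $\mathcal{O}(C/d)$, which requires $d\gg C$. The head bound you already stated is $F_m=\Omega(Cd)$; it comes from a $d\to C$ linear layer, from $C$ image--text similarities in CLIP, or from a per-pixel $1\times 1$ convolution in segmentation. Comparing $\mathcal{O}(M^2BC)$ against $\Omega(MBCd)$ then gives a ratio of $\mathcal{O}(M/d)$. So $d\gg M$ suffices, with no assumption about $C$, and the same holds pixelwise for segmentation.
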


Since OPERA uses $M = 3$ experts and the quadratic term in $M$ is small, the computational overhead of the ensemble fusion is minimal. The primary computational cost remains in running the expert agents' forward passes, which can be parallelized across GPUs. This efficiency makes OPERA practical for deployment in clinical settings where inference latency is a concern.

\paragraph{Empirical Efficiency Considerations.} We note that the inference efficiency of OPERA is not a practical bottleneck for clinical deployment due to several factors. First, the fusion operations (PECC, DAA, IER, and weight combination) contribute negligible latency compared to the expert forward passes, accounting for less than 1\% of the total inference time. Second, while running multiple expert agents sequentially incurs a proportional increase in latency, modern GPU architectures readily support parallel execution of multiple models using CUDA streams or multi-GPU setups, reducing the effective latency overhead to a small fraction above that of the slowest single expert. Third, GPU memory consumption for the full ensemble remains well within the capacity of standard clinical-grade hardware. Finally, for deployment scenarios with strict real-time constraints, practitioners can adopt a cascaded inference strategy where a lightweight expert first screens inputs and the full ensemble is invoked only for uncertain cases (e.g., when prediction entropy exceeds a threshold), thereby reducing average latency while preserving accuracy on difficult samples. These considerations, combined with OPERA's consistent performance improvements on diverse benchmarks, suggest that the efficiency-accuracy trade-off is favorable for practical clinical applications.

\section{Multiple-choice Evaluation Benchmark Details}

\subsection{Benchmark Source and Labels}
We extract the comprehensive fundus image evaluation benchmark from \cite{li2025visionunite}, which contains $845$ color fundus photographs annotated into $10$ disease categories (AMD, CSR, DR, DME, glaucoma, cataract, media haze, myopia, retinitis, and tessellation). The label distribution of this benchmark is reported in \cref{labeldis}.

\subsection{Multiple-choice Construction}
To cast this task as a multiple-choice problem, we create \emph{four} answer options for each image and question. The correct option is always the ground-truth disease label of that image. The remaining three options are distractors randomly sampled \emph{without replacement} from the other disease labels (i.e., excluding the ground-truth label), ensuring that no duplicate options appear within one question.

\begin{table*}[!ht]
  \centering
  \caption{The label distribution of the multiple-choice evaluation benchmark.}
  \vspace{-2mm}
    \begin{tabular}{c|c|c|c}
    \toprule
    \textbf{Label} & Number & \textbf{Label} & Number \\
    \midrule
    Age-related Macular Degeneration (AMD) & 171 & Cataract & 20\\
    Central Serous Retinopathy (CSR)& 7 & Diabetic Retinopathy (DR) & 149 \\
    Glaucoma & 162 & Media Haze & 31\\
    Myopia & 226 & Retinitis & 9 \\
    Diabetic Macular Edema (DME)& 58 & Tessellation & 12\\ \hline
    \multicolumn{2}{c|}{\textbf{Summary}} & \multicolumn{2}{c}{\textbf{845 images}}\\
    \bottomrule
    \end{tabular} 
  \label{labeldis}%
  \vspace{-2mm}
\end{table*}%

\section{Detailed Experimental Setup}
\label{sec:experimental_setup}

\subsection{Hardware and Software Environment}
All experiments are conducted on NVIDIA RTX A6000 GPUs (48GB memory). We implement our framework using PyTorch 1.13 with CUDA support. We leverage the \texttt{timm} library for model architectures and \texttt{scikit-learn} for evaluation metrics. To ensure reproducibility, we fix the random seed to 42 across all experiments and enable \texttt{cudnn.benchmark} for optimized GPU performance.

\subsection{Image Preprocessing and Hyperparameters}
Input images are resized to 224$\times$224 pixels as for the task of classification. During evaluation, we apply the standard preprocessing pipeline without data augmentation. For the validation-based weight tuning, we optionally employ RandAugment with magnitude of 6 and standard deviation of 0.5 (\texttt{rand-m6-mstd0.5-inc1}). Images are normalized using ImageNet statistics for consistency with pretrained model expectations. The related hyperparameters are as shown in \cref{tab:hyperparams}.

\begin{table}[!ht]
\centering
\caption{Hyperparameters used in the ensemble experiments for the task of classification. The weight tuning optimizer concerns only the cross-expert \emph{combination} weights over the \emph{frozen} pool.}
\vspace{-2mm}
\label{tab:hyperparams}
\begin{tabular}{lc}
\toprule
\textbf{Hyperparameter} & \textbf{Value} \\
\midrule
Batch size & 128 \\
Number of workers & 4 \\
Random seed & 42 \\
Weight tuning epochs & 20 \\
Weight tuning learning rate & 0.01 \\
Weight tuning optimizer & Adam \\
Early stopping patience & 5 \\
Initial confidence temperature & 2.0 \\
Weight softmax temperature & 10.0 \\
\bottomrule
\end{tabular}
\vspace{-4mm}
\end{table}

\subsection{Weight Tuning of Expert Profiling Module}
We employ a two-stage evaluation protocol:

\noindent\textbf{Stage 1: Weight Tuning on Validation Set.} Ensemble weights are optimized on the validation set using gradient descent. We use the Adam optimizer with a learning rate of 0.01 for 20 epochs with early stopping. The objective is to maximize the average AUC across all disease categories. Only the ensemble combination weights are trainable and all expert agent parameters remain frozen.

\noindent\textbf{Stage 2: Test Set Evaluation.} The tuned weights are applied to evaluate final performance on the held-out test set, ensuring unbiased assessment.

\subsection{Adaptive Weighting Strategies of Test-time Modules}
During inference, we incorporate several unsupervised adaptive weighting strategies:

\begin{itemize}
    \item \textbf{Prediction Agreement:} When models exhibit high agreement (agreement score $> 0.90$) and low variance ($< 0.03$), we rely on the base model-level weights.
    \item \textbf{Disagreement Handling:} When models disagree (agreement $< 0.60$ or variance $> 0.12$), we weight them by inverse entropy to trust more confident predictions.
    \item \textbf{Extremity Weighting:} For intermediate cases, models with predictions further from 0.5 receive higher weights, combined with base weight bias (70\% extremity, 30\% base).
    \item \textbf{Temperature Scaling:} Per-model temperature calibration adjusts overconfident ($T=1.5$ for sharpness $> 0.4$) or underconfident ($T=0.7$ for sharpness $< 0.1$) predictions.
\end{itemize}

\begin{figure*}[!t]
  \begin{center}
    \centerline{\includegraphics[width=\textwidth]{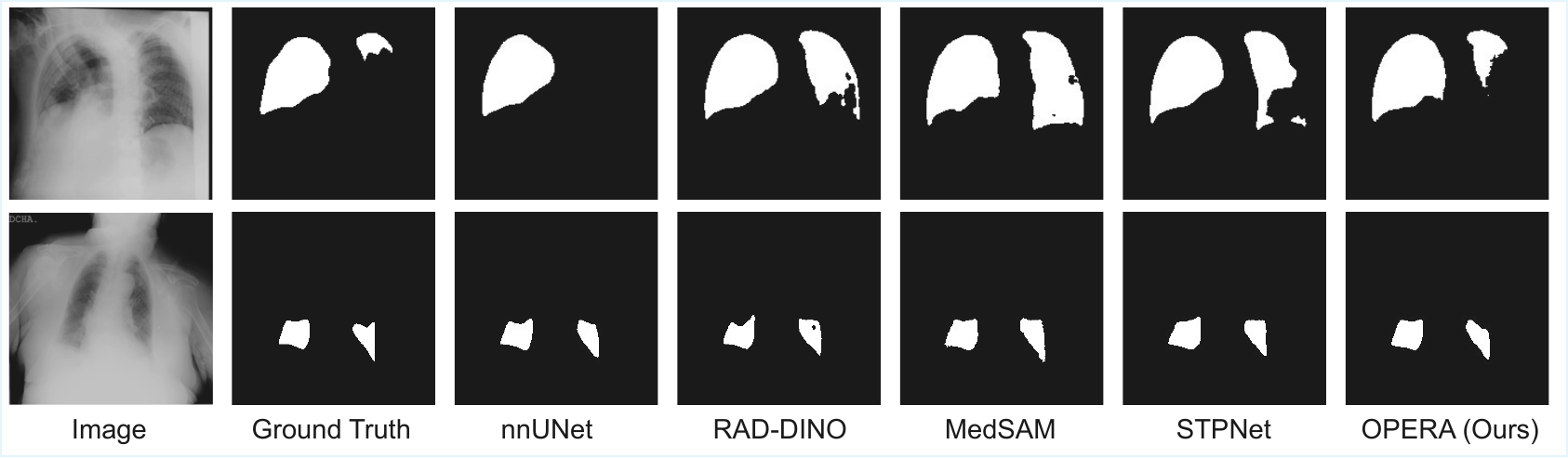}}
    \vspace{-4mm}
    \caption{Additional qualitative segmentation results on the COVID-Xray (QaTa-COV19) dataset. From left to right: original image, ground truth, nnUNet, RAD-DINO, MedSAM, STPNet, and OPERA (Ours).}
    \label{sup_results1}
  \end{center}
  \vspace{-5mm}
\end{figure*}

\begin{figure*}[!t]
  \begin{center}
    \centerline{\includegraphics[width=\textwidth]{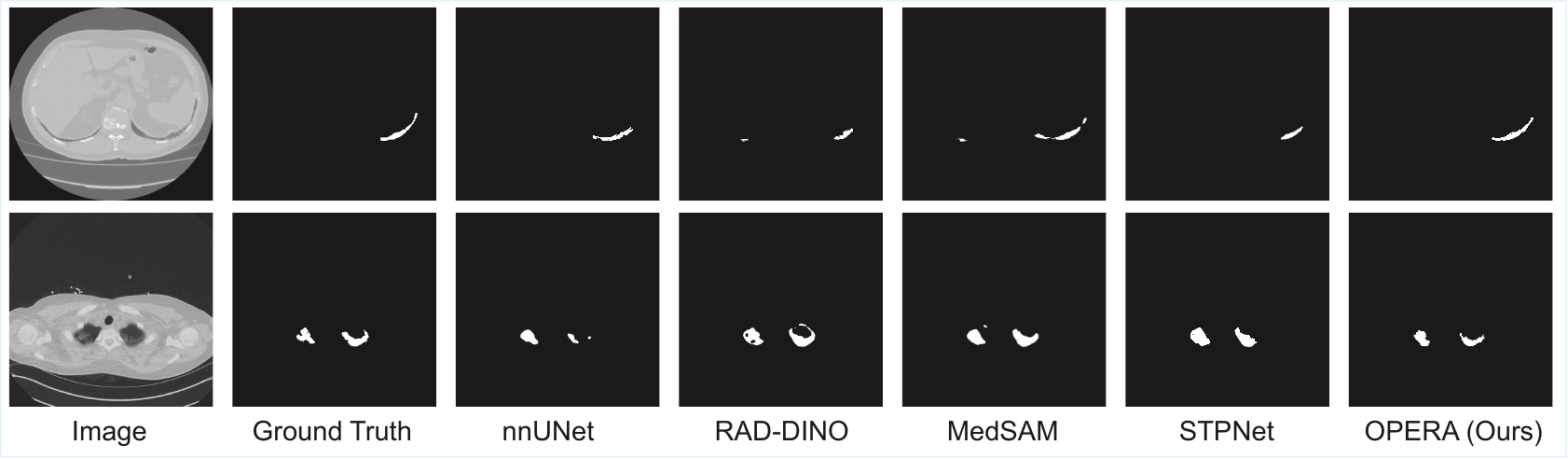}}
    \vspace{-4mm}
    \caption{Additional qualitative segmentation results on the COVID-CT (MosMedData+) dataset. From left to right: original image, ground truth, nnUNet, RAD-DINO, MedSAM, STPNet, and OPERA (Ours).}
    \label{sup_results2}
  \end{center}
  \vspace{-5mm}
\end{figure*}

\section{More Visualization Results}

\subsection{Visualization Results on COVID-Xray (QaTa-COV19)}

We provide additional qualitative segmentation results on the QaTa-COV19 dataset in Fig.~\ref{sup_results1}. The top row shows a case with infection segmentation where RAD-DINO and MedSAM produce noisy boundaries with scattered artifacts, while OPERA achieves smoother contours that closely match the ground truth. The bottom row depicts bilateral lung infections where OPERA consistently produces more accurate and complete segmentations with fewer false positives, demonstrating its robustness to the low-contrast and ambiguous boundaries commonly encountered in COVID-19 chest radiographs.

\subsection{Visualization Results on COVID-CT (MosMedData+)}

We present additional qualitative results on the MosMedData+ dataset in Fig.~\ref{sup_results2}. The top row shows a challenging case with a thin, elongated lesion structure where nnUNet under-segments the region and RAD-DINO produces discontinuous predictions. The bottom row presents a case with bilateral lesions of varying sizes, where MedSAM and RAD-DINO exhibit over-segmentation artifacts. In both cases, OPERA generates segmentations that more faithfully capture the shape and extent of the infection regions while effectively suppressing spurious predictions.

\subsection{Visualization Results on LA-MRI Dataset and Pancreas-CT Dataset}

We show additional qualitative results on the LA-MRI dataset and Pancreas-CT under the limited annotation setting (20\% labeled data) in Fig.~\ref{fig_results_2}. As shown in Fig.~\ref{fig_results_2} (top row), OPERA preserves the thin atrial boundaries and reduces local discontinuities better, which are common failure cases when pseudo-labels are noisy. The improvements highlight OPERA’s ability to maintain geometric consistency with limited annotations. Fig.~\ref{fig_results_2} (bottom row) shows that OPERA suppresses spurious foreground regions and recovers more complete pancreas structures, indicating improved robustness in ambiguous boundaries and noisy contexts in MRI and CT.

\begin{figure*}[!ht]
  \begin{center}
    \centerline{\includegraphics[width=\textwidth]{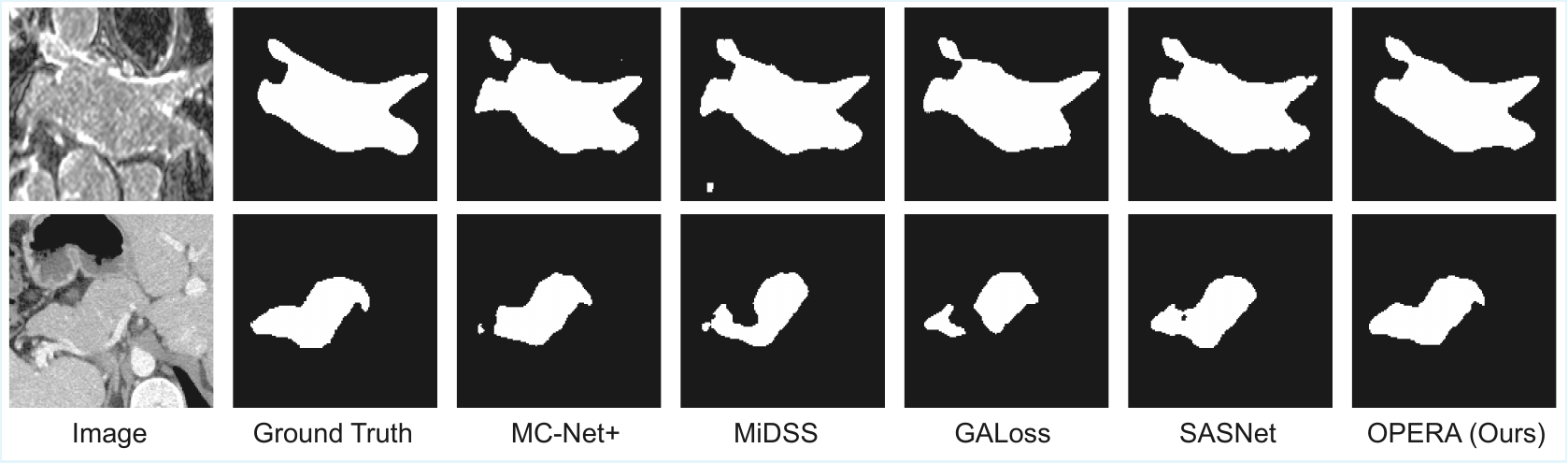}}
    \vspace{-4mm}
    \caption{Qualitative results under limited annotations (20\% labeled data). \textbf{Top:} Left atrial MRI (LA-MRI). \textbf{Bottom:} Pancreas CT (Pancreas-CT). OPERA produces more complete structures with fewer spurious regions, cleaner boundaries and better coverage.}
    \label{fig_results_2}
  \end{center}
  \vspace{-4mm}
\end{figure*}

\begin{table*}[!ht]
  \centering
  \caption{The performance of fair same-expert-pool comparison. All ensemble baselines use the identical expert pool.}
  \vspace{-2mm}
    \begin{tabular}{l ccc ccc cc}
    \toprule
    & \multicolumn{3}{c}{OIA-DDR (cls. Linear Probing)} & \multicolumn{3}{c}{OIA-DDR (cls. Fully Fine-tune)} & \multicolumn{2}{c}{QaTa-COV19 (seg.)}\\
    \cmidrule(lr){2-4}\cmidrule(lr){5-7}\cmidrule(lr){8-9}
    Method (same 3-expert pool) & AUC & ACC & mAP & AUC & ACC & mAP & Dice & Jaccard \\
    \midrule
    Probability averaging & 74.26 & 57.14 & 40.33 & 85.33 & 71.85 & 50.14 & 79.37 & 69.81 \\
    Temperature-scaled averaging  & 74.59 & 57.41 & 40.60 & 85.36 & 71.91 & 50.19 & 79.31 & 69.73 \\
    Bayesian model averaging  & 74.52 & 57.30 & 40.46 & 85.39 & 71.96 & 50.21 & 79.87 & 70.14 \\
    Validation-weighted averaging & 75.13 & 58.07 & 41.24 & 85.94 & 72.12& 51.06 & 79.99 & 70.38 \\
    Learned stacking  & 76.71 & 59.46 & 42.93 & 86.08 & 72.20 & 51.29 & 80.26 & 70.59 \\
    Learnable MoE gating & 78.42 & 61.11 & 44.16 & 87.11 & 72.44 & 53.82 & 80.77 & 71.40 \\
    Test-Time Ensemble   & 79.78 & 62.36 & 45.51 & 86.74 & 72.25 & 52.68 & 81.03 & 71.82 \\
    \textbf{OPERA (Ours)} & \textbf{84.95} & \textbf{69.57} & \textbf{49.81} & \textbf{88.06} & \textbf{72.89} & \textbf{55.90} & \textbf{82.12} & \textbf{73.26} \\
    \bottomrule
    \end{tabular}
  \label{ResultsSame}%
  \vspace{-2mm}
\end{table*}%

\section{Comparison of the Same-expert-pool}
We conduct the comparison experiments of the same-expert-pool models as shown in \cref{ResultsSame}, which places every competing method (temperature-scaled / validation-weighted averaging, stacking, MoE gating, BMA, TTA) on the same frozen three-expert pool OPERA uses, so any retained margin cannot be attributed to ``simply using more models.'' On both OIA-DDR (cls.) and QaTa-COV19 (seg.), OPERA still beats the strongest same-pool baseline (Test-Time Ensemble) by \textbf{+5.17\% AUC} and \textbf{+1.09\% Dice}, localizing gains to the proposed routing/adaptation. Our method also has matched-budget cost that equals a standard ensemble including peak GPU memory and per-sample latency.

\section{Discussion of Fully Unsupervised Mode}
For scenarios where \textbf{no labeled target-domain data is available}, OPERA provides two viable alternatives. First, \textit{source-domain transfer}: EPM weights computed on a source-domain validation set transfer effectively to related target domains, as the relative expertise of expert agents often generalizes across datasets within the same modality. Second, \textit{fully unsupervised operation}: OPERA can operate entirely without labeled data by initializing with uniform model weights ($w_{\text{model},m} = 1/M$) and uniform class attention ($W_{\text{class},m,c} = 1/M$), relying solely on test-time components (PECC, DAA, IER) for adaptive fusion and adjustment.

To validate this unsupervised operational mode, we evaluate OPERA*, which bypasses the validation-based EPM initialization and uses uniform weights instead. As shown in Table \ref{tab:performance_oiaddr}, OPERA* achieves 83.18\% AUC on OIA-DDR under linear probing and 86.30\% under full fine-tuning, substantially outperforming all individual expert agents and remaining competitive with state-of-the-art methods such as RETFound (85.96\%) and FLAIR (85.54\%). Similarly, Table \ref{tab:performance_covid} demonstrates that OPERA* achieves 81.54\% Dice on QaTa-COV19 and 76.80\% Dice on MosMedData+, exceeding strong baselines including nnUNet, MedSAM, and RAD-DINO. While OPERA* shows modest performance reduction compared to the full OPERA framework (e.g., 86.30\% vs. 88.06\% AUC on OIA-DDR with fine-tuning), the gap is relatively small, confirming that the test-time adaptation modules (PECC, DAA, IER) contribute meaningful gains even without any labeled data. Specifically, Per-Expert Confidence Calibration (PECC) and Instance-level Expert Routing (IER) operate purely on prediction statistics, requiring no ground truth.

Compared to existing approaches, OPERA's validation requirement is fundamentally less demanding: foundation models such as RETFound~\cite{zhou2023foundation} and VisionFM~\cite{qiu2024development} require millions of samples for pretraining and often task-specific fine-tuning; domain adaptation methods typically require target-domain samples and retraining cycles; and traditional ensemble methods often require joint optimization across all experts with full training data. OPERA's one-time validation-based calibration represents minimal overhead that can leverage even small labeled subsets or existing annotated datasets from similar domains. Once the model weights are computed offline with Expert Profiling Module (EPM), OPERA can be deployed to new test distributions without any model updates, making it well-suited for clinical settings where continuous retraining is infeasible. For deployment scenarios where even small validation sets are unavailable, OPERA* provides a fully unsupervised alternative that still achieves competitive performance through test-time adaptation alone.

\section{More Discussion}
OPERA demonstrates that an offline-calibrated zero-retraining method can achieve state-of-the-art performance across a remarkably broad range of biomedical imaging tasks, rivaling or surpassing specialized models in each domain. Unlike contemporary medical foundation models that rely on extensive self-supervised pretraining and task-specific fine-tuning \cite{moor2023, yao2025, zhou2023foundation}, OPERA integrates multiple complementary experts solely at inference time, without any additional training. This design directly addresses the challenge of distribution shift and unseen clinical domains \cite{moor2023}, leveraging expert diversity to ensure robust generalization across fundus photography, chest X-rays, CT scans, and structured multimodal diagnostic queries. The offline-calibrated zero-retraining method marks a clear departure from monolithic foundation and vision-language models, such as InternVL \cite{chen2024internvl}, Qwen-VL \cite{bai2023qwen}, Mini-Gemini \cite{li2024mini}, and instruction-tuned architectures like InstructBLIP \cite{instructblip} and LLaVA-Med \cite{li2024llava}, which typically require large-scale aligned image-text datasets and substantial computational resources for fine-tuning or instruction tuning \cite{liu2024visual}. In contrast, OPERA retains the accuracy and robustness benefits of ensembling while eliminating retraining costs, offering a lightweight yet powerful alternative.

Across benchmark datasets, OPERA consistently outperforms contemporary single-model approaches. In retinal disease classification, OPERA achieves a new state-of-the-art on the RFMiD challenge, reaching 89.47\% AUC and 58.93\% mAP, improving upon the prior best method RET-CLIP \cite{du2024ret} by 3.35\% AUC and 7.66\% mAP. Notably, OPERA also exceeds the performance of ophthalmology-specific foundation models, including VisionFM \cite{qiu2024development}, FLAIR \cite{silva2025foundation}, and RETFound \cite{zhou2023foundation}, despite those models being explicitly optimized for fundus imaging. On the OIA-DDR benchmark, OPERA similarly outperforms recent multimodal frameworks such as UniMed-CLIP \cite{khattak2024unimed} and MM-Retinal \cite{wu2024mm}. These results suggest that fusing heterogeneous expert representations can yield stronger and more transferable features than any individual foundation model, even within highly specialized domains. Beyond ophthalmology, OPERA generalizes effectively to chest and abdominal imaging tasks. On the Chest X-Ray14 benchmark, OPERA achieves the highest AUC (83.05\%) and mAP (30.58\%), surpassing recent chest X-ray foundation models, including Ark+ \cite{ma2025fully} and global-local integration models \cite{yang2025chest}. OPERA also outperforms an ensemble-of-foundation-models baseline such as ELF~\cite{luo2025ensemble}. In abdominal CT organ classification (OrganSMNIST), OPERA achieves 98.8\% AUC and 84.3\% accuracy without any task-specific adaptation. In contrast, most existing vision foundation models remain domain-constrained such as CheXFound for chest radiographs or VisionFM for ophthalmology and both of them require retraining when transferred to new tasks \cite{yao2025, qiu2024development}. OPERA’s strong performance across disparate classification tasks highlights the importance of ensemble learning \cite{Kuncheva2003, Ju2017}.

OPERA’s advantages extend to segmentation tasks, despite not being explicitly trained for segmentation. On COVID-19 lesion segmentation benchmarks (QaTa-COV19 and MosMedData+), OPERA achieves Dice scores of 82.1\% and 77.0\%, outperforming fully supervised architectures such as nnU-Net \cite{isensee2021nnu} and recent prompt-based methods like STPNet \cite{shan2025stpnet}. These results are particularly notable given that segmentation-specific models often rely on carefully optimized architectures such as Transformer-based variant \cite{chen2024transunet}. We attribute OPERA’s gains to its dynamic weighting strategies (EPM, PECC, DAA, IER), which enable per-sample adaptation by modulating expert contributions based on confidence, disagreement, and class relevance. Furthermore, OPERA demonstrates strong robustness in data-scarce situations. With only 20\% labeled data, it outperforms recent semi-supervised segmentation methods, including mutual consistency learning \cite{wu2022mutual}, pseudo-label guided contrastive learning \cite{basak2023pseudo}, and anatomically-aware uncertainty models \cite{adiga2024anatomically}. While these approaches leverage unlabeled data through additional training, OPERA achieves superior Dice scores without any retraining, further validating the effectiveness of inference-time ensemble adaptation. These findings align with prior work \cite{Noothout2022, Codella2016} showing that well-constructed ensembles can outperform single models even in limited-data settings.

Despite OPERA achieving broad generalization across diverse biomedical imaging tasks, it inherits several limitations commonly associated with ensemble-based methods. Running multiple expert agents in parallel inevitably increases inference cost and system latency \cite{Noothout2022}. We also want to clarify the scope of OPERA's ``zero-retraining'' design. The term specifically refers to the \textit{deployment phase}: OPERA requires \textbf{no backpropagation, no gradient computation, and no parameter updates to expert agents during inference}. This distinguishes OPERA from conventional ensemble methods that necessitate joint retraining, iterative fine-tuning, or complex meta-learning methods. However, OPERA's Expert Profiling Module (EPM) does require a labeled validation set to initialize model-level weights $\mathbf{w}_{\text{model}}$, class-level attention weights $\mathbf{W}_{\text{class}}$, and temperature parameters $\{T_m\}_{m=1}^{M}$ during an offline calibration phase. This calibration is a \textit{one-time, lightweight computation} that: (1) requires only forward passes through expert agents without backpropagation; (2) computes AUC scores analytically without iterative optimization; (3) completes in approximately five minutes on standard validation splits such as for the Chest X-Ray14 dataset~\cite{wang2017chestx}; and (4) can be performed once and cached for subsequent deployment. At the same time, we also provide the fully unsupervised mode of OPERA and its performance still remains competitive as shown in \cref{tab:performance_oiaddr} and \cref{tab:performance_covid}.

\end{document}